\DeclareMathOperator{\E}{\mathbb{E}}
\DeclareMathOperator*{\argmin}{arg\,min}
\DeclareMathOperator{\point}{\mathnormal{v}}
\DeclareMathOperator{\Points}{\mathcal{C}}
\DeclareMathOperator{\pcolor}{\mathnormal{h_{\ell}}}
\DeclareMathOperator{\Colors}{\mathcal{H}}
\DeclareMathOperator{\Income}{\mathcal{H}_{\mathnormal{R}}}
\DeclareMathOperator{\distt}{\mathcal{D}_{\mathrm{True}}}
\DeclareMathOperator{\disti}{\mathcal{D}_{\mathrm{Indep}}}
\DeclareMathOperator{\pacc}{\mathnormal{p}_{\text{acc}}}
\DeclareMathOperator{\pn}{\mathnormal{p}_{\text{noise}}}
\newcommand{\adult}{\textbf{Adult}}
\newcommand{\bank}{\textbf{Bank}}
\newcommand{\credit}{\textbf{CreditCard}}
\newcommand{\cens}{\textbf{Census1990}}
\DeclareMathOperator{\POF}{\mathrm{POF}}
\DeclareMathOperator{\OPT}{\mathrm{OPT}}
\DeclareMathOperator{\PFC}{\mathrm{PFC}}
\DeclareMathOperator{\DFCLB}{\mathrm{DFC}_{\mathrm{LB}}}
\DeclareMathOperator{\instpfc}{\mathcal{I}_{\mathrm{PFC}}}
\DeclareMathOperator{\FAIRAPFC}{\mathrm{FA-PFC}}
\DeclareMathOperator{\FAIRAPFCLB}{\mathrm{FA-PFC-LB}}
\DeclareMathOperator{\OPTPFC}{\mathrm{OPT}_{\mathrm{PFC}}}
\DeclareMathOperator{\clust}{\mathrm{Cluster}}
\newcommand{\floor}[1]{\left\lfloor #1 \right\rfloor}
\newcommand{\ceil}[1]{\left\lceil #1 \right\rceil}
\newtheorem{theorem}{Theorem}[section]
\newtheorem{assumption}{Assumption}[section]
\newtheorem{lemma}{Lemma}[section]
\DeclarePairedDelimiter\abs{\lvert}{\rvert}
\begin{document}
%

\title{Probabilistic Fair Clustering\footnote{An earlier version of this paper was published in NeurIPS 2020. This version is updated to include a correction to the solution for the multi-color case under the large cluster assumption from polynomial time to fixed-parameter tractable.}}
\author[]{Seyed A. Esmaeili\thanks{esmaeili@cs.umd.edu}}
\author[]{Brian Brubach\thanks{bbrubach@cs.umd.edu}}
\author[]{Leonidas Tsepenekas\thanks{ltsepene@cs.umd.edu}}
\author[]{John P. Dickerson\thanks{john@cs.umd.edu}}
\affil[]{Department of Computer Science, University of Maryland, College Park}

\date{}

\maketitle              

\begin{abstract}
  In clustering problems, a central decision-maker is given a complete metric graph over vertices and must provide a clustering of vertices that minimizes some objective function.  In \emph{fair} clustering problems, vertices are endowed with a \emph{color} (e.g., membership in a group), and the features of a valid clustering might also include the representation of colors in that clustering.  Prior work in fair clustering assumes complete knowledge of group membership. In this paper, we generalize prior work by assuming imperfect knowledge of group membership through probabilistic assignments. We present clustering algorithms in this more general setting with approximation ratio guarantees.  We also address the problem of ``metric membership,'' where different groups have a notion of order and distance. Experiments are conducted using our proposed algorithms as well as baselines to validate our approach and also surface nuanced concerns when group membership is not known deterministically.
\end{abstract}

\section{Introduction}\label{sec:intro}
Machine-learning-based decisioning systems are increasingly used in high-stakes situations, many of which directly or indirectly impact society.  Examples abound of automated decisioning systems resulting in, arguably, morally repugnant outcomes: hiring algorithms may encode the biases of human reviewers’ training data~\cite{BoRi18a}, advertising systems may discriminate based on race and inferred gender in harmful ways~\cite{sweeney2013discrimination}, recidivism risk assessment software may bias its risk assessment improperly by race~\cite{angwin2016machine}, and healthcare resource allocation systems may be biased against a specific race~\cite{ledford2019millions}.  A myriad of examples such as these and others motivate the growing body of research into defining, measuring, and (partially) mitigating concerns of fairness and bias in machine learning.  Different metrics of algorithmic fairness have been proposed, drawing on prior legal rulings and philosophical concepts;~\cite{mehrabi2019survey} give a recent overview of sources of bias and fairness as presently defined by the machine learning community.

The earliest work in this space focused on fairness in \emph{supervised} learning~\cite{Luong11:KNN,Hardt16:Equality} as well as \emph{online} learning~\cite{Joseph16:Fairness}; more recently, the literature has begun expanding into fairness in \emph{unsupervised} learning~\cite{chierichetti2017fair}.  In this work, we address a novel model of fairness in clustering---a fundamental unsupervised learning problem.  Here, we are given a complete metric graph where each vertex also has a color associated with it, and we are concerned with finding a clustering that takes both the metric graph and vertex colors into account. Most of the work in this area~(e.g.,~\cite{ahmadian2019clustering, bercea2018cost, chierichetti2017fair}) has defined a fair clustering to be one that minimizes the cost function subject to the constraint that each cluster satisfies a lower and an upper bound on the percentage of each color it contains---a form of approximate \emph{demographic parity} or its closely-related cousin, the $p\%$-rule~\cite{biddle2006adverse}.  We relax the assumption that a vertex’s color assignment is known deterministically; rather, for each vertex, we assume only knowledge of a distribution over colors.

Our proposed model addresses many real-world use cases.  \cite{ahmadian2019clustering} discuss clustering news articles such that no political viewpoint---assumed to be known deterministically---dominates any cluster.  Here, the color membership attribute---i.e., the political viewpoint espoused by a news article---would not be provided directly but could be predicted with some probability of error using other available features.  \cite{awasthi2019effectiveness} discuss the case of supervised learning when class labels are not known with certainty (e.g., due to noisy crowdsourcing or the use of a predictive model).  Our model addresses such motivating applications in the unsupervised learning setting, by defining a fair cluster to be one where the color proportions satisfy the upper and lower bound constraints \emph{in expectation}. Hence, it captures standard deterministic fair clustering as a special case.

\noindent\textbf{Outline \& Contributions.}  We begin (\S\ref{sec:rw}) with an overview of related research in general clustering, fairness in general machine learning, as well as recent work addressing fairness in unsupervised learning.  Next (\S\ref{sec:prelims}), we define two novel models of clustering when only probabilistic membership is available: the first assumes that colors are unordered, and the second embeds colors into a metric space, thus endowing them with a notion of order and distance.  This latter setting addresses use cases where, e.g., we may want to cluster according to membership in classes such as age or income, whose values are naturally ordered.  Following this (\S\ref{theory_sec}), we present two approximation algorithms with theoretical guarantees in the settings above.  We also briefly address the (easier but often realistic) ``large cluster'' setting, where it is assumed that the optimal solution does not contain pathologically small clusters.  Finally (\S\ref{sec:experiments}), we verify our proposed approaches on four real-world datasets. We note that all proofs are put in the appendix due to the page limit. 

\section{Related Work}\label{sec:rw}
Classical forms of the metric clustering problems $k$-center, $k$-median, and $k$-means are well-studied within the context of unsupervised learning and operations research. While all of these problems are NP-hard, there is a long line of work on approximating them and heuristics are commonly used in many practical applications. This vast area is surveyed by~ \cite{aggarwal2013data} and we focus on approximation algorithms here. For $k$-center, there are multiple approaches to achieve a $2$-approximation and this is the best possible unless $P = NP$~\cite{Hochbaum1985,Gonzalez1985,Hochbaum1986}. Searches for the best approximations to $k$-median and $k$-means are ongoing. For $k$-median there is a $(2.675 + \epsilon)$-approximation with a running time of $n^{O((1/\epsilon)\log(1/\epsilon))}$~\cite{byrka2014improved} and for $k$-means, a $6.357$-approximation is the best known~\cite{ahmadian2019better}.

The study of approximation algorithms that achieve demographic fairness for metric clustering was initiated by~\cite{chierichetti2017fair}. They considered a variant of $k$-center and $k$-median wherein each point is assigned one of two colors and the color of each point is known. Followup work extended the problem setting to the $k$-means objective, multiple colors, and the possibility of a point being assigned multiple colors (i.e. modeling intersecting demographic groups such as gender and race combined)~\cite{bercea2018cost,bera2019fair,backurs2019scalable,NIPS2019_8976}. Other work considers the one-sided problem of preventing over-representation of any one group in each cluster rather than strictly enforcing that clusters maintain proportionality of all groups~\cite{ahmadian2019clustering}.

In all of the aforementioned cases, the colors (demographic groups) assigned to the points are known a priori. By contrast, we consider a generalization where points are assigned a distribution on colors. We note that this generalizes settings where each point is assigned a single deterministic color. By contrast, our setting is distinct from the setting where points are assigned multiple colors in that we assume each point has a single true color. In the area of supervised learning, the work of~\cite{awasthi2019effectiveness} addressed a similar model of uncertain group membership. Other recent work explores unobserved protected classes from the perspective of assessment~\cite{kallus2019assessing}. However, no prior work has addressed this model of uncertainty for metric clustering problems in unsupervised learning.

\section{Preliminaries and Problem Definition}\label{sec:prelims}
Let $\Points$ be the set of points in a metric space with distance function $d:\Points \times \Points \rightarrow \mathbf{R}_{\ge 0}$. The distance between a point $\point$ and a set $S$ is defined as $d(\point,S)=\min_{j \in S} d(i,j)$. In a $k$-clustering an objective function $L^k({\Points})$ is given, a set $S \subseteq \Points$ of at most $k$ points must be chosen as the set of centers, and each point in $\Points$ must get assigned to a center in $S$ through an assignment function $\phi:\Points \rightarrow S$ forming a $k$-partition of the original set: $\Points_1,\dots,\Points_k$. The optimal solution is defined as a set of centers and an assignment function that minimizes the objective $L^k({\Points})$. The well known $k$-center, $k$-median, and $k$-means can all be stated as the following problem:
\begin{equation}\label{eq:k_cluster}
    \min_{S: |S| \leq k,\phi} L^k_{p}(\Points) = \min_{S: |S| \leq k,\phi}  \Big(\sum_{\point \in \Points} d^p(\point,\phi(\point))\Big)^{1/p}
\end{equation}
where $p$ equals $\infty,1,$ and $2$ for the case of the $k$-center, $k$-median, and $k$-means, respectively. For such problems the optimal assignment for a point $\point$ is the nearest point in the chosen set of centers $S$. However, in the presence of additional constraints such as imposing a lower bound on the cluster size \cite{aggarwal2010achieving} or an upper bound \cite{khuller2000capacitated,cygan2012lp,an2015centrality} this property no longer holds. This is also true for fair clustering. 

To formulate the fair clustering problem, a set of colors $\Colors=\{h_1,\dots,\pcolor,\dots,h_{m}\}$ is introduced and each point $\point$ is mapped to a color through a given function $\chi: \Points \rightarrow \Colors$. Previous work in fair clustering \cite{chierichetti2017fair,ahmadian2019clustering,bercea2018cost,bera2019fair} adds to the objective function of (\ref{eq:k_cluster}) the following proportional representation constraint, i.e.:
\begin{equation} \label{eq:fair_constraint}
    \forall i \in S, \forall \pcolor \in \Colors : l_{\pcolor} |\Points_i| \leq |\Points_{i,{\pcolor}}| \leq u_{\pcolor} |\Points_i|
\end{equation}
where $\Points_{i,{h_{\ell}}}$ is the set of points in cluster $i$ having color $h_{\ell}$.  The bounds $l_{\pcolor} , u_{\pcolor} \in (0,1)$ are given lower and upper bounds on the proportion of a given color in each cluster, respectively. 

In this work we generalize the problem by assuming that the color of each point is not known deterministically but rather probabilistically. We also address the case where the colors lie in a 1-dimensional Euclidean metric space.

\subsection{Probabilistic Fair Clustering}\label{pfc}
In \emph{probabilistic fair clustering}, we generalize the problem by assuming that the color of each point is not known deterministically but rather probabilistically. That is, each point $\point$ has a given value $p^{\pcolor}_{\point}$ for each $\pcolor \in \Colors$, representing the probability that point $\point$ has color $\pcolor$, with $\sum_{\pcolor \in \Colors}p^{\pcolor}_{\point} = 1$. 

The constraints are then modified to have the expected color of each cluster fall within the given lower and upper bounds. This leads to the following optimization problem:
\begin{subequations}\label{pfc_opt}
  \begin{equation}
    \label{pfc_eq_1}
     \min_{S: |S| \leq k,\phi} L^k_p(\Points) \\
  \end{equation}
  \begin{equation}
    \label{pfc_eq_2}
     \text{s.t. } \forall i \in S, \forall \pcolor \in \Colors: l_{h_{\ell}} |\phi^{-1}(i)| \leq \sum_{\point \in \phi^{-1}(i)} p^{h_{\ell}}_{\point} \leq u_{h_{\ell}} |\phi^{-1}(i)|
  \end{equation}
\end{subequations}
where $\phi^{-1}(i)$ refers to the set of points assigned to cluster $i$, or in other words $\Points_i$.  

Following~\cite{bera2019fair}, we define a $\gamma$ violating solution to be one for which for all $i \in S$: 
\begin{align}\label{viol_pfc}
    l_{h_{\ell}} |\phi^{-1}(i)|-\gamma \leq \sum_{\point \in \phi^{-1}(i)} p^{h_{\ell}}_{\point} \leq u_{h_{\ell}} |\phi^{-1}(i)|+\gamma
\end{align}
This notion effectively captures the amount $\gamma$, by which a certain solution violates the fairness constraints.

\subsection{Metric Membership Fair Clustering}\label{mbfc}
Representing a point's (individual's) membership using colors may be sufficient for binary or other unordered categorical variables. However, this may leave information ``on the table'' when a category is, for example, income or age, since colors do not have an inherent sense of order or distance.

For this type of attribute, the membership can be characterized by a 1-dimensional Euclidean space. Without loss of generality, we can represent the set of all possible memberships as the set of all consecutive integers from $0$ to some $R>0$, where $R$ is the maximum value that can be encountered. Hence, let $\Income=\{0,\dots,r,\dots,R\}$ where $r$ is an integer and $r\geq 0$. Each point $\point$ has associated with it a value $r_{\point} \in \Income$. In this problem we require the average total value of each cluster to be within a given interval. Hence: 
\begin{subequations}
  \begin{equation}
    \label{mm_eq_1}
     \min_{S: |S|\leq k,\phi} L^k_p(\Points) \\
  \end{equation}
  \begin{equation}
    \label{mm_eq_2}
    \text{s.t. } \forall i \in S: l |\phi^{-1}(i)| \leq \sum_{\point \in \phi^{-1}(i)} r_{\point} \leq u |\phi^{-1}(i)| 
  \end{equation}
\end{subequations}
where $l$ and $u$ are respectively upper and lower bounds imposed on each cluster. 

Similar to section \ref{pfc}, we define a $\gamma$ violating solution to be one for which $\forall i \in S$: 
\begin{align}\label{viol_mm}
    l |\phi^{-1}(i)|-\gamma \leq \sum_{\point \in \phi^{-1}(i)} r_{\point}  \leq u |\phi^{-1}(i)|+\gamma
\end{align}

\section{Approximation Algorithms and Theoretical Guarantees} \label{theory_sec}
\subsection{Algorithms for the Two Color and Metric Membership Case} \label{theory_two_color}
Our algorithm follows the two step method of \cite{bera2019fair}, although we differ in the LP rounding scheme.  Let $\PFC(k,p)$ denote the probabilistic fair clustering problem. The color-blind clustering problem, where we drop the fairness constraints, is denoted by $\clust(k,p)$. Further, define the fair assignment problem $\FAIRAPFC(S,p)$ as the problem where we are given a fixed set of centers $S$ and the objective is to find an assignment $\phi$ minimizing $L^k_p(\Points)$ and satisfying the fairness constraints \ref{pfc_eq_2} for probabilistic fair clustering or \ref{mm_eq_2} for metric-membership.
We prove the following (similar to theorem 2 in \cite{bera2019fair}):
\begin{theorem}\label{bera_theorem}
Given an $\alpha$-approximation algorithm for $\mathrm{Cluster}(k,p)$ and a $\gamma$-violating algorithm for $\FAIRAPFC(S,p)$, a solution with approximation ratio $\alpha+2$ and constraint violation at most $\gamma$ can be achieved for $\PFC(k,p)$. 
\end{theorem}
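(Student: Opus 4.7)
The plan is to follow the standard ``reduction'' strategy of Bera et al.: run the color-blind $\alpha$-approximation to pin down a set of centers $S$, and then invoke the $\gamma$-violating assignment subroutine on $S$. The only thing we have to verify is that these centers admit a fair (non-violating) assignment whose $L_p$-cost is at most $(\alpha+2)\,\OPT_{\PFC}$, because the $\gamma$-violating subroutine will then return something no worse in cost and only $\gamma$-off in constraint satisfaction.

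Let $(S^\ast,\phi^\ast)$ attain $\OPT_{\PFC}$. Run $\clust(k,p)$ on $\Points$ to obtain centers $S$ whose color-blind cost is at most $\alpha\cdot\OPT_{\clust}\le\alpha\cdot\OPT_{\PFC}$ (the latter inequality because dropping the fairness constraints can only reduce the optimum). First I would construct an explicit ``witness'' fair assignment $\phi':\Points\to S$ by composing $\phi^\ast$ with the map $\sigma:S^\ast\to S$ that sends each optimal center to its nearest point in $S$; that is, $\phi'(\point)=\sigma(\phi^\ast(\point))$. Two observations about $\phi'$:
\begin{enumerate}[leftmargin=*]
    \item (\emph{Feasibility.}) Each cluster of $\phi'$ is a union of clusters of $\phi^\ast$. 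Both fairness conditions (\ref{pfc_eq_2}) and (\ref{mm_eq_2}) are linear in the cluster size and in $\sum_{\point}p_{\point}^{\pcolor}$ (or $\sum_{\point}r_{\point}$), so summing a collection of feasible clusters yields another feasible cluster. Hence $\phi'$ satisfies the fairness constraints exactly.
    \item (\emph{Cost.}) For every $\point$, write $a_{\point}=d(\point,\phi^\ast(\point))$ and $b_{\point}=d(\point,S)$. Using the triangle inequality twice,
    \[
        d(\point,\phi'(\point))\le d(\point,\phi^\ast(\point))+d(\phi^\ast(\point),\sigma(\phi^\ast(\point)))\le a_{\point}+\bigl(a_{\point}+b_{\point}\bigr)=2a_{\point}+b_{\point},
    \]
    where the second step uses that $\sigma(\phi^\ast(\point))$ is the nearest point of $S$ to $\phi^\ast(\point)$ and is therefore no farther from $\phi^\ast(\point)$ than the nearest point of $S$ to $\point$.
\end{enumerate}
Next I would lift this pointwise inequality to the $L_p$ objective by Minkowski's inequality:
\[
    L^k_p(\phi')=\Bigl(\sum_{\point}d(\point,\phi'(\point))^p\Bigr)^{1/p}\le 2\,\|a_{\point}\|_p+\|b_{\point}\|_p\le 2\,\OPT_{\PFC}+\alpha\,\OPT_{\PFC}=(\alpha+2)\,\OPT_{\PFC},
\]
since $\|a_{\point}\|_p=L^k_p(\phi^\ast)=\OPT_{\PFC}$ and $\|b_{\point}\|_p$ equals the color-blind cost at $S$, which is at most $\alpha\,\OPT_{\clust}\le\alpha\,\OPT_{\PFC}$. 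For $p=\infty$ (i.e., $k$-center) the same argument goes through with $\max$ in place of sums.

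Finally, feed $S$ into the $\gamma$-violating algorithm for $\FAIRAPFC(S,p)$. Since $\phi'$ is a genuinely fair (zero-violation) assignment to $S$ of cost at most $(\alpha+2)\,\OPT_{\PFC}$, and the $\gamma$-violating subroutine is benchmarked against the optimal fair assignment cost on $S$, it returns an assignment $\phi$ with $L^k_p(\phi)\le L^k_p(\phi')\le(\alpha+2)\,\OPT_{\PFC}$ and violation at most $\gamma$ in the sense of (\ref{viol_pfc})/(\ref{viol_mm}). This establishes the theorem. I expect the main subtlety to be the feasibility-under-merging step for the probabilistic constraints: it is tempting but wrong to argue pointwise, and one has to explicitly exploit the linearity of (\ref{pfc_eq_2}) in $|\phi^{-1}(i)|$ and in $\sum p_{\point}^{\pcolor}$ to conclude that unions of fair clusters remain fair. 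Everything else is a straightforward triangle inequality / Minkowski computation.
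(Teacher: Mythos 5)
Your proposal is correct and follows essentially the same route as the paper's proof: construct the witness assignment $\phi'$ by rerouting each optimal cluster to the nearest color-blind center, bound its cost via the triangle inequality plus Minkowski, and verify fairness of $\phi'$ because merged fair clusters stay fair. The only cosmetic difference is that you argue feasibility by summing the linear constraints directly, while the paper phrases the same fact as a mediant inequality (the merged ratio lies between the min and max of the constituent ratios); both are equivalent.
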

\begin{proof}
See appendix \ref{p1}
\end{proof}

An identical theorem and proof follows for the metric membership problem as well. 
\subsubsection{Step 1, Color-Blind Approximation Algorithm:}\label{step1}
At this step an ordinary (color-blind) $\alpha$-approximation algorithm is used to find the cluster centers. For example, the Gonzalez algorithm \cite{gonzalez1985clustering} can be used for the $k$-center problem or the algorithm of \cite{byrka2014improved} can be used for the $k$-median. This step results in a set $S$ of cluster centers.
Since this step does not take fairness into account, the resulting solution does not necessarily satisfy constraints \ref{pfc_eq_2} for probabilistic fair clustering and \ref{mm_eq_2} for metric-membership.

\subsubsection{Step 2, Fair Assignment Problem:}\label{step2}
In this step, a linear program (LP) is set up to satisfy the fairness constraints. The variables of the LP are $x_{ij}$ denoting the assignment of point $j$ to center $i$ in $S$. Specifically, the LP is:



\begin{subequations}
 \begin{equation}
    \label{LP1}
    \min \sum_{j \in \Points, i \in S} d^p(i,j) x_{ij} \\
 \end{equation}
 \begin{equation}
    \label{LP0}
    \text{s.t. } \forall i \in S \text{ and } \forall \pcolor \in \Colors: 
 \end{equation}
 \begin{equation}
    \label{LPf1}
    l_{\pcolor} \sum_{j \in \Points} x_{ij} \leq \sum_{j \in \Points} p^{\pcolor}_{j}x_{ij} \leq u_{\pcolor} \sum_{j \in \Points}x_{ij}
 \end{equation}
 \begin{equation}
    \label{LP_one}
    \forall j \in \Points: \sum_{i \in S} x_{ij} =1 , \quad 0 \leq x_{ij} \leq 1
 \end{equation}
\end{subequations}
Since the LP above is a relaxation of $\FAIRAPFC(S,p)$, we have $\OPT^{\mathrm{LP}}_{\FAIRAPFC} \leq \OPT_{\FAIRAPFC}$. We note that for $k$-center there is no objective, instead we have the following additional constraint: $x_{ij}=0 \text{ if } d(i,j) >w$ where $w$ is a guess of the optimal radius. Also, for $k$-center the optimal value is always the distance between two points. Hence, through a binary search over the polynomially-sized set of distance choices we can WLOG obtain the minimum satisfying distance. Further, for the metric membership case $p^{\pcolor}_{j}, l_{\pcolor}$ and $u_{j}$  in \ref{LPf1} are replaced by $r_{j}, l$ and $u$, respectively.

What remains is to round the fractional assignments $x_{ij}$ resulting from solving the LP. 

\subsubsection{Rounding for the Two Color and Metric Membership Case}\label{two_color_rounding}
First we note the connection between the metric membership problem and the two color case of probabilistic fair clustering. Effectively the set $\Income=\{0,1, \dots,R\}$ is the unnormalized version of the set of probabilities $\{0,\frac{1}{R},\frac{2}{R},\dots,1\}$. 
\begin{algorithm}[h!]
   \caption{Form Flow Network Edges for Culster $C_i$}
   \label{alg:nf_const_alg}
\begin{algorithmic}
   \STATE $\vec{A}_{i}$ are the points $j \in \phi^{-1}(i)$ in non-increasing order of $p_j$
   \STATE  initialize array $\vec{a}$ of size $\lvert C_i\rvert$ to zeros, and set $s=1$
   \STATE  put the assignment $x_{ij}$ for each point $j$ in $\vec{A}_{i}$ in $\vec{z}_{i}$ according the vertex order in $\vec{A}_{i}$
   \FOR{$q=1$ {\bfseries to} $\lvert C_i\rvert$}
        \STATE $\vec{a}(q) = \vec{a}(q) + x_{i\vec{A}_{i}(s)}$, and add edge $(\vec{A}_{i}(s),q)$
        \STATE $\vec{z}_{i}(s)=0$
        \STATE $s=s+1$ \COMMENT{Move to the next vertex} 
        \REPEAT  
        \STATE $\mathrm{valueToAdd} = min(1-\vec{a}(q),\vec{z}_{i}(s))$
    	\STATE $\vec{a}(q) = \vec{a}(q) + \mathrm{valueToAdd}$, and add edge $(\vec{A}_{i}(s),q)$
    	\STATE $\vec{z}_{i}(s) = \vec{z}_{i}(s) - \mathrm{valueToAdd}$
    	    \IF{$\vec{z}_{i}(s)=0$}
    	        \STATE $s=s+1$
	        \ENDIF 
        \UNTIL $\vec{a}(q)=1$ or $s>\lvert \vec{A}_i\rvert$ \COMMENT{until we have accumulated 1 or ran out of vertices}
   \ENDFOR
\end{algorithmic}
\end{algorithm}
Our rounding method is based on calculating a minimum-cost flow in a carefully constructed graph. For each $i \in S$, a set $C_{i}$ with $|C_{i}|=\ceil{\sum_{j \in \Points} x_{ij}}$ vertices is created. Moreover, the set of vertices assigned to cluster $i$, i.e. $\phi^{-1}(i)=\{j \in \Points|x_{ij}>0\}$ are sorted in a non-increasing order according to the associated value $r_j$ and placed into the array $\vec{A}_{i}$. A vertex in $C_i$ (except possibly the last) is connected to as many vertices in $\vec{A}_{i}$ by their sorting order until it accumulates an assignment value of 1. A vertex in $\vec{A}_{i}$ may be connected to more than one vertex in $C_i$ if that causes the first vertex in $C_i$ to accumulate an assignment value of 1 with some assignment still remaining in the $\vec{A}_{i}$ vertex. In this case the second vertex in $C_i$ would take only what remains of the assignment. See Algorithm \ref{alg:nf_const_alg} for full details. Appendix \ref{NF_details} demonstrates an example. 

We denote the set of edges that connect all points in $\Points$ to points in $C_i$ by $E_{\Points,C_i}$. Also, let $V_{\text{flow}} = \Points \cup (\cup_{i \in S} C_i) \cup S \cup \{t\}$ and $E_{\text{flow}} = E_{\Points,C_i} \cup E_{C_i,S} \cup E_{S,t}$, where $E_{C_i,S}$ has an edge from every vertex in $C_i$ to the corresponding center $i \in S$. Finally $E_{S,t}$ has an edge from every vertex $i$ in $S$ to the sink $t$ if $\sum_{j \in \Points} x_{ij}>\floor{\sum_{j\in \Points} x_{ij}}$. The demands, capacities, and costs of the network are:
\begin{itemize}
    \item \textbf{Demands}: Each $v \in \Points$ has demand $d_v= -1$ (a supply of 1), $d_u= 0$ for each $u \in C_i$, $d_i=\floor{\sum_{j\in \Points} x_{ij}}$ for each $i \in S$. Finally $t$ has demand $d_t=\lvert\Points\rvert-\sum_{i \in S} d_i$. 
    \item \textbf{Capacities}: All edge capacities are set to 1.
    \item \textbf{Costs}: All edges have cost 0, expect the edges in $E_{\Points,C_i}$ where $\forall (v,u) \in E_{\Points,C_i}, d(v,u)=d(v,i) \text{ for the $k$-median and } d(v,u)=d^2(v,i)$. For the $k$-center, either setting suffices. 
\end{itemize}

\begin{figure}
\centering
\begin{tikzpicture}
[
    xscale=0.35, yscale=0.35,auto,thick,
    st node/.style={
        inner sep=0pt,minimum size=12pt,
        circle, fill=white, draw
  	},
  	v node/.style={
        inner sep=0pt,minimum size=12pt,
        circle, fill=white, draw, font=\small
  	},
  	c node/.style={
        inner sep=0pt,minimum size=14pt,
        circle, fill=white, draw, font=\small
  	},
  	big c node/.style={
        inner sep=0pt,minimum size=14pt,
        circle, fill=white, draw, font=\small
  	},
  	label node/.style={
        font=\small
  	}
]

	
    \node [v node] (v1) at (0,0) {$v_1$};
    \node [v node] (v2) [below of=v1, yshift=5] {$v_2$};
    \node [v node] (v3) [below of=v2, yshift=5] {$v_3$};
    \node [v node] (v4) [below of=v3, yshift=5] {$v_4$};
    \node [v node] (v5) [below of=v4, yshift=5] {$v_5$};
    \node [v node] (v6) [below of=v5, yshift=5] {$v_6$};

    \node [c node] (c1i) [right of=v1, xshift=20] {$c^1_i$};
    \node [c node] (c2i) [below of=c1i, yshift=-10] {$c^2_i$};
    \node [c node] (c3i) [below of=c2i, yshift=-10] {$c^3_i$};

    \node [big c node] (ci) [right of=c1i, xshift=20, yshift=-40] {$i$};
    \node [big c node] (ciprime) [below of=ci, xshift=0, yshift=-50] {$i'$};
    
    \node [st node] (sink) [right of=ci, xshift=20, yshift=-30] {$t$};

    \draw[->] (v1) -- (c1i);
    \draw[->] (v2) -- (c1i);
    \draw[->] (v2) -- (c2i);
    \draw[->] (v3) -- (c2i);
    \draw[->] (v4) -- (c3i);
    \draw[->] (v5) -- (c3i);
    \draw[->] (v6) -- (c3i);

    \draw[->] (c1i) -- (ci);
    \draw[->] (c2i) -- (ci);
    \draw[->] (c3i) -- (ci);

    \draw[->] (ci) -- (sink);
    \draw[->] (ciprime) -- (sink);


\end{tikzpicture}
\caption{Network flow construction.}\label{NF_diagram}
\end{figure}
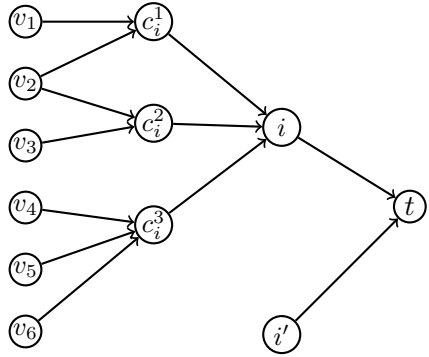

See Figure \ref{NF_diagram} for an example. It is clear that the entire demand is $|\Points|$ and that this is the maximum possible flow. The LP solution attains that flow. Further, since the demands, capacities and distances are integers, an optimal integral minimum-cost flow can be found in polynomial time. If $\Bar{x}_{ij}$ is the integer assignment that resulted from the flow computation, then violations are as follows:
\begin{theorem}\label{viol_2_th}
    The number of vertices assigned to a cluster (cluster size) is violated by at most 1, i.e. $\abs{\sum_{j \in \Points} \Bar{x}_{ij} - \sum_{j \in \Points} x_{ij}}\leq 1$. Further for metric membership, the violation in the average value is at most $2R$, i.e. $\abs{\sum_{j \in \Points} \Bar{x}_{ij} r_j- \sum_{j \in \Points} x_{ij}r_j} \leq 2R$. It follows that for the probabilistic case, the violation in the expected value is at most 2.  
\end{theorem}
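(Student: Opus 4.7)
My plan is to decompose both the LP objective and the integral min-cost flow slot-by-slot, using the sorted ordering of $\vec A_i$ produced by Algorithm~\ref{alg:nf_const_alg}, and then bound the per-slot discrepancies via a telescoping argument. For the cluster-size claim, the flow network has unit edge capacities and every $c_i^q\in C_i$ has demand $0$, so in any integral flow each $c_i^q$ either carries exactly one unit (i.e.\ is the target of precisely one point in $\Points$) or none. Writing $n_i:=\sum_j x_{ij}$ and recalling $|C_i|=\lceil n_i\rceil$, the demand $\lfloor n_i\rfloor$ at $i$ together with the unit-capacity edge from $i$ to $t$ forces the number of saturated slots in $C_i$ to lie in $\{\lfloor n_i\rfloor,\lceil n_i\rceil\}$; since this number equals $\sum_j\bar x_{ij}$, the inequality $|\sum_j\bar x_{ij}-\sum_j x_{ij}|\leq 1$ is immediate.

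For the average-value bound, let $x_{ij,q}$ denote the fractional flow that the LP places on edge $(j,c_i^q)$ in Algorithm~\ref{alg:nf_const_alg}, so that $\sum_j x_{ij}r_j=\sum_q X_q$ with $X_q:=\sum_j x_{ij,q}r_j$ and $\|x\|_q:=\sum_j x_{ij,q}$ equal to $1$ for every $q<|C_i|$ and at most $1$ for $q=|C_i|$. By construction, slot $c_i^q$ is connected only to a contiguous subrange $[s_q,e_q]$ of $\vec A_i$, and since consecutive slots share at most a single boundary point the sorted (non-increasing) ordering of $\vec A_i$ gives the chain-like inequality $r_{\vec A_i(s_{q+1})}\leq r_{\vec A_i(e_q)}$. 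Hence $X_q$ lies in $[r_{\vec A_i(e_q)}\,\|x\|_q,\,r_{\vec A_i(s_q)}\,\|x\|_q]$, while an active slot in the integral flow contributes some $r_{p(q)}\in[r_{\vec A_i(e_q)},r_{\vec A_i(s_q)}]$ and an inactive slot contributes $0$.

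Summing these per-slot discrepancies is the heart of the proof. When every slot of $C_i$ is active, the per-slot gap for $q<|C_i|$ is at most $r_{\vec A_i(s_q)}-r_{\vec A_i(e_q)}$, and the chain-like inequality telescopes so that $\sum_{q<|C_i|}(r_{\vec A_i(s_q)}-r_{\vec A_i(e_q)})\leq r_{\vec A_i(s_1)}-r_{\vec A_i(e_{|C_i|-1})}$; combining this with the contribution of the partial-weight last slot yields the two-sided estimate $|\sum_j\bar x_{ij}r_j-\sum_j x_{ij}r_j|\leq r_{\vec A_i(s_1)}\leq R$. When exactly one slot is inactive (which happens only when $n_i$ is fractional), the easy subcase is that the inactive slot is the last one, since the missing mass $X_{|C_i|}\leq r_{\vec A_i(s_{|C_i|})}$ is absorbed by the same telescoping budget. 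The hard part will be when the inactive slot is an internal one $q^*<|C_i|$, because the telescoping chain is broken at $q^*$; I would handle this by re-telescoping separately on the sub-chains $q<q^*$ and $q>q^*$ and using $r_{\vec A_i(s_{q^*+1})}\leq r_{\vec A_i(e_{q^*})}$ to bridge the gap, appealing to the min-cost structure of the integral flow if the purely combinatorial telescoping is insufficient. The probabilistic claim then follows by specializing $r_j:=p^{\pcolor}_j\in[0,1]$, so the range $R$ becomes $1$.
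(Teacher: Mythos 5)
Your route is the same as the paper's: decompose cluster $i$'s value slot-by-slot along the sorted array $\vec A_i$, trap each slot's contribution between the smallest and largest $r$-value it touches, and telescope using the overlap structure $R^{\max}_{q+1,i}\leq R^{\min}_{q,i}$ of consecutive slots. Your argument for the cluster-size claim (flow conservation at $i$ plus the single unit-capacity edge to $t$ pins the number of saturated slots to $\{\floor{n_i},\ceil{n_i}\}$ with $n_i=\sum_j x_{ij}$) is exactly the intended one, which the paper does not even write out, and your handling of the case where every slot of $C_i$ is active, including the partial-weight last slot, coincides with (and is more careful than) the paper's proof.

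The divergence is in the case you correctly flag as hard: a starved \emph{internal} slot. The paper's proof silently assumes every slot receives exactly one vertex after rounding, so it never meets this case; you meet it but your proposed bridge does not close it in the undershoot direction. If slot $q^*<\abs{C_i}$ carries no flow, that slot loses its entire LP mass $F_{q^*}\in[R^{\min}_{q^*,i},R^{\max}_{q^*,i}]$ \emph{on top of} the per-slot drops $R^{\max}_{q,i}-R^{\min}_{q,i}$ of the remaining active slots; re-telescoping the two sub-chains and bridging with $R^{\max}_{q^*+1,i}\leq R^{\min}_{q^*,i}$ only bounds the total loss by roughly $R^{\max}_{1,i}+R^{\max}_{q^*+1,i}$, i.e.\ up to $2R$. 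Nor can you appeal to the min-cost structure: every edge from a point $j$ into any slot of $C_i$ carries the identical cost $d(j,i)$ (or $d^2(j,i)$), so the objective is indifferent to which slot of cluster $i$ is starved and to which incident vertex an active slot selects. Concretely, with $n_i=1.5$, $x_{ia}=0.6$, $x_{ib}=0.7$, $x_{ic}=0.2$, $r_a=r_b=R$, $r_c=0$, the integral flow that starves slot $1$ and routes only $c$ into slot $2$ is feasible, meets the demand $\floor{n_i}=1$, is min-cost when distances are degenerate, and changes $\sum_j x_{ij}r_j$ by $1.3R$. So your instinct that something beyond plain telescoping is needed here is right, but as written neither your sketch nor the paper's own proof supplies it; a complete argument must either rule out starved internal slots (e.g.\ by forcing every non-final slot of $C_i$ to be saturated in the network construction) or bound the starved slot's lost mass against the gains of the slots that replace it.
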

\begin{proof}
For a given center $i$, every vertex $q \in C_i$ is assigned some vertices and adds value $\sum_{j \in \phi^{-1}(i,q)} R_j x^q_{ij}$ to the entire average (expected) value of cluster $i$ where $\phi^{-1}(i,q)$ refers to the subset in $\phi^{-1}(i)$ assigned to $q$. After the rounding,  $\sum_{j \in \phi^{-1}(i,q)} R_j x^q_{ij}$ will become $\sum_{j \in \phi^{-1}(i,q)} R_j \bar{x}^q_{ij}$. Denoting $\max_{j \in \phi^{-1}(i,q)} R_j$ and $\min_{j \in \phi^{-1}(i,q)} R_j$ by $R^{max}_{q,i}$ and $R^{min}_{q,i}$, respectively. The following bounds the maximum violation:
\begin{align*}
    & \sum_{q=1}^{|C_i|} \Big(\sum_{j \in \phi^{-1}(i,q)} R_j \bar{x}^q_{ij}\Big) - \sum_{q=1}^{|C_i|} \Big(\sum_{j \in \phi^{-1}(i,q)} R_j x^q_{ij}\Big)  \\ 
    & =\sum_{q=1}^{|C_i|} \sum_{j \in \phi^{-1}(i,q)} \Big( R_j \bar{x}^q_{ij}- R_j x^q_{ij} \Big) \\
    & \leq R^{max}_{|C_i|,i}  + \sum_{q=1}^{|C_i|-1} R^{max}_{q,i} - R^{min}_{q,i} \\
    & = R^{max}_{|C_i|,i}  + \Big( R^{max}_{1,i} - R^{min}_{1,i} \Big) + \Big( R^{max}_{2,i} - R^{min}_{2,i} \Big) \\
    & + \Big( R^{max}_{3,i} - R^{min}_{3,i} \Big) + \dots + \Big( R^{max}_{|C_i|-1,i} - R^{min}_{|C_i|-1,i} \Big)\\
    & \leq R^{max}_{|C_i|,i}  + \Big( R^{max}_{1,i} - R^{min}_{1,i} \Big) + \Big( R^{min}_{1,i} - R^{min}_{2,i} \Big) \\
    & + \Big( R^{min}_{2,i} - R^{min}_{3,i} \Big) + \dots + \Big( R^{min}_{|C_i|-2,i} - R^{min}_{|C_i|-1,i} \Big)\\
    & \leq R^{max}_{|C_i|,i}  + R^{max}_{1,i} - R^{min}_{|C_i|-1,i}\\
    & \leq 2R-0 =2R 
\end{align*}
where we invoked the fact that $R^{max}_{k,i} \leq R^{min}_{k-1,i}$. By a similar argument it can be shown that the maximum drop is $-2R$. For the probabilistic case, simply $R=1$. 
\end{proof}

Our rounding scheme results in a violation for the two color probabilistic case that is at most $2$, whereas for metric-membership it is $2R$. The violation of $2R$ for the metric membership case suggests that the rounding is too loose, therefore we show a lower bound of at least $\frac{R}{2}$ for any rounding scheme applied to the resulting solution. This also makes our rounding asymptotically optimal. 
\begin{theorem}\label{lb_round_th}
Any rounding scheme applied to the resulting solution has a fairness constraint violation of at least $\frac{R}{2}$ in the worst case. 
\end{theorem}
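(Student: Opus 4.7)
The plan is to exhibit a small adversarial instance together with a specific fractional LP solution, and then to argue by exhaustive case analysis that every integer rounding deviates from the fractional cluster value by at least $R/2$ at some center. This will match the $R$ upper bound from Theorem \ref{viol_2_th} up to a factor of two, establishing that the rounding of Section \ref{two_color_rounding} is asymptotically optimal.

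Concretely, I would take $|S|=2$ centers $i_1,i_2$ together with two points $j_1,j_2$ whose metric-membership values sit at the two extremes of the range, $r_{j_1}=0$ and $r_{j_2}=R$, and set the fairness bounds tightly to $l=u=R/2$. The uniform fractional assignment $x_{ij}=1/2$ for every pair $(i,j)$ is then feasible for the LP in \ref{LP1}--\ref{LP_one}: every point is fully covered since $\sum_i x_{ij}=1$, and at each center one has $\sum_j r_j x_{ij}=R/2=(R/2)\sum_j x_{ij}$, so constraint \ref{mm_eq_2} holds with equality and the fractional value of each cluster equals exactly $R/2$.

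Next I would enumerate every integer rounding $\bar x_{ij}\in\{0,1\}$ with $\sum_i \bar x_{ij}=1$; there are only four. If the two points are routed to distinct centers (two symmetric cases), one cluster receives integer value $0$ and the other $R$, so the deviation $\lvert\sum_j \bar x_{ij}r_j - \sum_j x_{ij}r_j\rvert$ equals $R/2$ at both clusters. If instead both points are routed to the same center (two more symmetric cases), that center has integer value $R$ while the other is empty with integer value $0$, so again the deviation from the fractional value $R/2$ is exactly $R/2$ at each center. Thus in every one of the four possible integer roundings, some cluster's value deviates from the LP by at least $R/2$, which translates directly into a fairness-constraint violation of $R/2$ at a cluster whose size is preserved by the rounding.

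The only subtle design choice, and the part I would think hardest about, is pinning the LP solution to the boundary of the feasibility region so that no rounding can hide the gap inside a slack interior: choosing $l=u=R/2$ with extremal values $\{0,R\}$ forces the fractional cluster value to the midpoint $R/2$, while every integer cluster sum achievable from these two points is either $0$ or $R$. Because no integer rounding can straddle this midpoint, an $R/2$ gap is unavoidable at some cluster, which is exactly what drives the lower bound.
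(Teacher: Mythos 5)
Your underlying mechanism is the same one the paper uses: pin the bounds tightly at $l=u=\frac{R}{2}$ and split a point of extremal value half-and-half between two clusters, so that any integral decision moves $\frac{R}{2}$ worth of color across the boundary. Your analysis of the ``split'' roundings (each point to a distinct center) is correct. However, the two-point instance is too degenerate, and this creates two genuine gaps.

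First, nothing forces the LP to output your adversarial fractional solution. With $l=u=\frac{R}{2}$ and values $\{0,R\}$, the feasibility constraints reduce to $x_{i j_1}=x_{i j_2}$ at each center, so the feasible region is the segment $\{(a,a,1-a,1-a): a\in[0,1]\}$, whose vertices $a=0$ and $a=1$ are \emph{integral}. A vertex-seeking LP solver returns an already-integral assignment (both points at one center), which needs no rounding and satisfies the fairness constraint exactly; the point $a=\tfrac12$ is never ``the resulting solution.'' Second, even granting the fractional solution $a=\tfrac12$, your own enumeration contains the counterexample: the two ``merge'' roundings place both points in one cluster of size $2$ with color sum $R$, and since the constraint in \eqref{viol_mm} scales as $l\,|\phi^{-1}(i)|\le \sum_j r_j \le u\,|\phi^{-1}(i)|$, the bounds become $[R,R]$ and the violation is $0$ (the empty cluster likewise violates nothing). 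Your deviation $\lvert\sum_j \bar x_{ij}r_j-\sum_j x_{ij}r_j\rvert=\frac{R}{2}$ only ``translates directly into a fairness-constraint violation'' when the cluster size is preserved --- which is exactly the case your merge roundings escape. So there exists a rounding of your instance with zero fairness violation, and the lower bound does not follow.

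The paper's construction repairs both defects by using five points (values $\frac{3R}{4},0,R,\frac{3R}{4},0$) with centers at points $2$ and $4$ and a radius constraint that forces $x_{21}=x_{22}=x_{44}=x_{45}=1$; the balance constraint then \emph{uniquely} determines $x_{23}=x_{43}=\frac12$, so the fractional solution is unavoidable, the only admissible roundings are the two choices for point $3$, and each shifts the color sum of both clusters by exactly $\frac{R}{2}$ while the cluster sizes change by only the permitted $\pm 1$. To salvage your argument you would need to similarly anchor enough integrally-assigned mass at each center that the merge roundings are excluded and the fractional split is forced.
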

\begin{proof}
Consider the following instance (in Figure \ref{viol_lb}) with 5 points. Points 2 and 4 are chosen as the centers and both clusters have the same radius. The entire set has average color: $\frac{2(0)+2(\frac{3R}{4})+R}{2+2+1}=\frac{\frac{5R}{2}}{5}=\frac{R}{2}$. If the upper and lower values are set to $u=l=\frac{R}{2}$, then the fractional assignments for cluster 1 can be: $x_{21}=1,x_{22}=1,x_{23}=\frac{1}{2}$, leading to average color $\frac{\frac{3R}{4}+0+\frac{R}{2}}{1+1+\frac{1}{2}}=\frac{R}{2}$. For cluster 2 we would have: $x_{43}=\frac{1}{2},x_{44}=1,x_{45}=1$ and the average color is $\frac{R(\frac{3}{4}+\frac{1}{2})}{\frac{5}{2}}=\frac{\frac{5R}{4}}{\frac{5}{2}}=\frac{R}{2}$. Only assignments $x_{23}$ and $x_{43}$ are fractional and hence will be rounded. WLOG assume that $x_{23}=1$ and $x_{43}=0$. It follows that the change (violation) in the assignment $\sum_j r_j x_{ij}$ for a cluster $i$ will be $\frac{R}{2}$. Consider cluster 1, the resulting color is $\frac{3R}{4}+R=\frac{7R}{4}$, the change is $|\frac{7R}{4}-\frac{5R}{4}|=\frac{R}{2}$. Similarly, for cluster 2 the change is $|\frac{5R}{4}-\frac{3R}{4}|=\frac{R}{2}$. 

\begin{figure}[h!t]
\centering
\centerline{\includegraphics[width=0.45\textwidth]{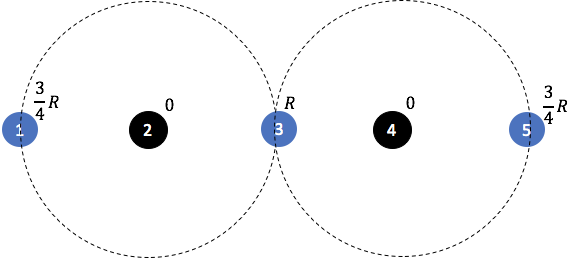}}
\caption{Points 2 and 4 have been selected as centers by the integer solution. Each points has its value written next to.}
\label{viol_lb}
\end{figure}

\end{proof}

\subsection{Algorithms for the Multiple Color Case Under a Large Cluster Assumption:}\label{mc_lc}
First, we point out that for the multi-color case, the algorithm is based on the assumption that the cluster size is large enough. Specifically:
\begin{assumption}\label{lc_assumption}
Each cluster in the optimal solution should have size at least $L=\Omega(n^r)$ where $r \in (0,1)$. 
\end{assumption}
We firmly believe that the above is justified in real datasets. Nonetheless, the ability to manipulate the parameter $r$, gives us enough flexibility to capture all occurring real-life scenarios. 
\begin{theorem}\label{whp_sample}
If Assumption \ref{lc_assumption} holds, then independent sampling results in the amount of color for each clusters to be concentrated around its expected value with high probability. 
\end{theorem}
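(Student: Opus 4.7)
The plan is to apply a standard Chernoff/Hoeffding concentration argument, cluster-by-cluster and color-by-color, and then union bound. Fix an optimal cluster $\Points_i$ and a color $\pcolor \in \Colors$. For each $\point \in \Points_i$, let $X_{\point} = \mathbb{1}[\chi(\point) = \pcolor]$, which is an independent Bernoulli random variable with mean $p^{\pcolor}_{\point}$ by the probabilistic color model. The amount of color $\pcolor$ in cluster $i$ is then $X = \sum_{\point \in \Points_i} X_{\point}$, with expectation $\mu = \sum_{\point \in \Points_i} p^{\pcolor}_{\point}$, which is exactly the quantity appearing in the constraint \eqref{pfc_eq_2}.

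Next, since the $X_{\point}$ are independent and bounded in $[0,1]$, I would invoke Hoeffding's inequality to get $\Pr\bigl[\,\lvert X - \mu \rvert \ge t\,\bigr] \le 2\exp\!\bigl(-2t^2/|\Points_i|\bigr)$. Choosing $t = \sqrt{c\,|\Points_i|\log n}$ for a sufficiently large constant $c$ gives a failure probability of at most $2n^{-2c}$. Union bounding over the at most $k \le n$ clusters and the $m \le n$ colors introduces only a factor of $n^2$ in the failure probability, so by picking $c$ large enough every cluster--color pair simultaneously satisfies $\lvert X - \mu \rvert \le O\!\bigl(\sqrt{|\Points_i|\log n}\bigr)$ with probability at least $1 - 1/\mathrm{poly}(n)$.

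Finally, I would translate this into the ``concentration around the expected value'' statement by invoking Assumption~\ref{lc_assumption}. Since $|\Points_i| \ge L = \Omega(n^{r})$ with $r \in (0,1)$, the additive deviation satisfies
\[
\frac{\sqrt{|\Points_i|\log n}}{|\Points_i|} \;=\; O\!\left(\sqrt{\tfrac{\log n}{L}}\right) \;=\; O\!\left(\sqrt{\log n}\cdot n^{-r/2}\right) \;=\; o(1),
\]
so the empirical color count is within a $(1 \pm o(1))$ multiplicative factor of its expectation, i.e. concentrated w.h.p.

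The main (mild) obstacle is just choosing the deviation parameter $t$ so that it is simultaneously (i) large enough that the Chernoff tail survives the union bound over $O(km)$ events, and (ii) small enough relative to $L$ to yield a vanishing relative error. The lower-bound hypothesis $L = \Omega(n^r)$ is exactly what makes these two requirements compatible: any $r > 0$ dominates the $\sqrt{\log n}$ factor, so the argument goes through uniformly. No structural properties of the clustering beyond the size lower bound are needed, which is why the same statement applies to both the probabilistic and metric-membership variants (in the latter case one uses Hoeffding directly on $r_{\point} \in [0,R]$, paying an extra factor of $R$ in $t$).
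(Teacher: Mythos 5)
Your proof is correct and follows essentially the same route as the paper's: a per-cluster, per-color concentration bound (the paper uses the multiplicative Chernoff bound with $\E[S^{\pcolor}_{C_i}] \ge l_{\min}L$, you use Hoeffding with $t=\sqrt{c\,|\Points_i|\log n}$) followed by a union bound over the $O(km)$ cluster--color pairs, with Assumption~\ref{lc_assumption} supplying the cluster-size lower bound that makes the tail survive the union bound. The only point to tighten is your final jump from an additive deviation of $O(\sqrt{|\Points_i|\log n})$ to a $(1\pm o(1))$ \emph{multiplicative} guarantee around $\mu$: that step additionally needs $\mu \ge l_{\min}|\Points_i|$, which holds because the optimal cluster satisfies the fairness lower bound and which the paper invokes explicitly.
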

\begin{proof}
First, each cluster $C_i$ has an amount of color $\pcolor$ equal to $S^{\pcolor}_{C_i}$ with $\E[S^{\pcolor}_{C_i}] = \sum_{\point \in C_i} p^{\pcolor}_{\point}$ according to theorem \ref{is_t1}. Furthermore, since the cluster is valid it follows that: $l_{\pcolor} \leq \E[S^{\pcolor}_{C_i}] \leq u_{\pcolor}$.  Define $l_{\min}=\min_{\pcolor \in \Colors}  \{l_{\pcolor}\} >0$, then for any $\delta \in [0,1]$ by Theorem \ref{chernoff} we have:
\begin{align*}
      &\Pr(|S^{\pcolor}_{C_i} -\E[S^{\pcolor}_{C_i}]| >  \delta \E[S^{\pcolor}_{C_i}] ) \leq 2e^{-\E[S^{\pcolor}_{C_i}] \delta^2/3}\\ 
      & \leq 2 \exp(- \frac{\delta^2}{3} \sum_{\point \in C_i} p^{\pcolor}_{\point} ) \leq 2 \exp(- \frac{\delta^2}{3} L l_{\min})
\end{align*}
This upper bounds the failure probability for a given cluster. For the entire set we use the union bound and get:
\begin{align*}
    &\Pr\Big(\Bigl\{\exists  i \in \{1,\dots,k\}, \pcolor \in \Colors \text{ s.t. } |S^{\pcolor}_{C_i} -\E[S^{\pcolor}_{C_i}]| >  \delta \E[S^{\pcolor}_{C_i}]\Bigl\}\Big)   \\
    & \leq 2k \lvert\Colors\rvert \exp(- \frac{\delta^2}{3} L l_{\min}) \leq 2 \frac{n}{L} \lvert\Colors\rvert\exp(- \frac{\delta^2}{3} L l_{\min})\\
    & \leq 2 \lvert\Colors\rvert n^{1-r} \exp(- \frac{\delta^2}{3}  l_{\min} n^{r})
\end{align*}
It is clear that given $r, \delta$, and $l_{\min}$ there exists a constant $c$ such that the above is bounded by $\frac{1}{n^c}$. Therefore, the result holds with high probability. 
\end{proof}

Given Theorem~\ref{whp_sample} our solution essentially forms a reduction from the problem of probabilistic fair clustering $\PFC(k,p)$ to the problem of deterministic fair clustering with lower bounded cluster sizes which we denote by $\DFCLB(k,p,L)$ (the color assignments are known deterministically and each cluster is constrained to have size at least $L$). 
\begin{algorithm}\label{}
   \caption{Algorithm for Large Cluster $\PFC(k,p)$}
   \label{alg:large_pfc_alg}
\begin{algorithmic}
   \STATE {\bfseries Input:} $\Points, d, k, p, L, \{(l_{\pcolor},u_{\pcolor})\}_{\pcolor \in \Colors}$
   \STATE Relax the upper and lower by $\epsilon$: $\forall \pcolor \in \Colors,$ $l_{\pcolor} \leftarrow l_{\pcolor}(1-\epsilon)$ and $u_{\pcolor} \leftarrow u_{\pcolor}(1+\epsilon)$
   \STATE For each point $\point \in \Points$ sample its color independently according to $p^{\pcolor}_{\point}$ 
   \STATE Solve the deterministic fair clustering problem with lower bounded clusters $\DFCLB(k,p,L)$ over the generated instance and return the solution. 
\end{algorithmic}
\end{algorithm}
Our algorithm (\ref{alg:large_pfc_alg}) involves three steps. In the first step, the upper and lower bounds are relaxed since -although we have high concentration guarantees around the expectation- in the worst case the expected value may not be realizable (could not be an integer). Moreover the upper and lower bounds could coincide with the expected value causing violations of the bounds with high probability. See appendix \ref{is_details} for more details.

After that, the color assignments are sampled independently. The following deterministic fair clustering problem is solved for resulting set of points:
\begin{subequations}\label{dfclb}
  \begin{equation}
     \min_{S: |S|\leq k,\phi} L^k_p(\Points) \\  \end{equation}
  \begin{equation}
    \label{dfclb_eq_2}
     \text{s.t. } \forall i \in S: (1-\delta)l_{\pcolor} |\Points_i| \leq |\Points_{i,{\pcolor}}| \leq (1+\delta)u_{\pcolor} |\Points_i|
  \end{equation}
  \begin{equation}
    \label{dfclb_eq_3}
         \forall i \in S:  |\Points_i| \ge L
  \end{equation}
\end{subequations}
The difference between the original deterministic fair clustering problem and the above is that the bounds are relaxed by $\epsilon$ and a lower bound $L$ is required on the cluster size. This is done in order to guarantee that the resulting solution satisfies the relaxed upper and lower bounds in expectation, because small size clusters do not have a Chernoff bound and therefore nothing ensures that they are valid solutions to the original $\PFC(k,p)$ problem. 

The algorithm for solving deterministic fair clustering with lower bounded cluster sizes $\DFCLB$ is identical to the algorithm for solving the original deterministic fair clustering \cite{bera2019fair,bercea2018cost} problem with the difference being that the setup LP will have a bound on the cluster size. That is we include the following constraint $\forall i \in S: \sum_{ij} x_{ij} \ge L$. However, the lower bound on the cluster size causes an issue, since it is possible that a center or centers from the color-blind solution need to be closed. Therefore, we have to try all possible combinations for opening and closing the centers. Since there are at most $2^k$ possibilities, this leads to a run-time that is fixed parameter tractable $O(2^k \text{poly}(n))$. In theorem \ref{lb_appendix} we show that this leads to an approximation ratio of $\alpha+2$ like the ordinary (deterministic) fair clustering case, where again $\alpha$ is the approximation ratio of the color blind algorithm. See also appendix \ref{dfc_lb_appendix} for further details.

\begin{theorem}\label{last-thm}
Given an instance of the probabilistic fair clustering problem $\PFC(k,p)$, with high probability algorithm \ref{alg:large_pfc_alg} results in a solution with violation at most $\epsilon$ and approximation ratio $(\alpha+2)$ in $O(2^k \text{poly}(n))$ time. 
\end{theorem}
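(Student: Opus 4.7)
The plan is to split the analysis into a cost bound and a constraint-violation bound, each obtained via a separate invocation of Theorem~\ref{whp_sample}, and to combine the two high-probability events by a union bound at the end.

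For the cost, I would first fix an optimal probabilistic solution $\OPT_{\PFC}$. By Assumption~\ref{lc_assumption} each cluster $C_i$ has $|C_i|\ge L$, and probabilistic feasibility gives $l_{\pcolor}|C_i|\le \E[S^{\pcolor}_{C_i}]\le u_{\pcolor}|C_i|$ for every color $\pcolor$. Invoking Theorem~\ref{whp_sample} on this fixed clustering with concentration parameter $\delta=\epsilon$ shows that, with high probability after the independent sampling step of Algorithm~\ref{alg:large_pfc_alg}, the realized counts satisfy $(1-\epsilon)l_{\pcolor}|C_i|\le S^{\pcolor}_{C_i}\le (1+\epsilon)u_{\pcolor}|C_i|$. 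Consequently $\OPT_{\PFC}$ is a feasible solution of the $\DFCLB(k,p,L)$ instance that Algorithm~\ref{alg:large_pfc_alg} constructs, and since the clustering cost depends only on the centers and the assignment (not on the colors) we obtain $\OPT_{\DFCLB}\le \OPT_{\PFC}$ in cost. Combined with the $(\alpha+2)$-approximation guarantee for $\DFCLB$ stated in Theorem~\ref{lb_appendix}, the returned clustering has cost at most $(\alpha+2)\OPT_{\PFC}$.

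For the violation, the returned clustering $\phi$ satisfies the relaxed sampled-count bounds $(1-\epsilon)l_{\pcolor}|\phi^{-1}(i)|\le S^{\pcolor}_{\phi^{-1}(i)}\le (1+\epsilon)u_{\pcolor}|\phi^{-1}(i)|$ together with $|\phi^{-1}(i)|\ge L$. A second application of Theorem~\ref{whp_sample} to $\phi$, again with $\delta=\epsilon$, places each sampled count within a $1\pm \epsilon$ multiplicative factor of its expectation $\sum_{v\in \phi^{-1}(i)} p^{\pcolor}_{v}$. Chaining the two inequalities sandwiches the expected count in a window of the form $[(1-O(\epsilon))l_{\pcolor}|\phi^{-1}(i)|,(1+O(\epsilon))u_{\pcolor}|\phi^{-1}(i)|]$, which is the desired $\epsilon$-level violation of the probabilistic fairness constraint after absorbing constants into $\epsilon$. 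A union bound over the two concentration-failure events then yields the claim with high probability.

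The main obstacle is the second invocation of Theorem~\ref{whp_sample}, since the output clustering $\phi$ is itself a function of the random color samples, whereas Theorem~\ref{whp_sample} is proved for a fixed partition. I would close this gap by union-bounding the $\exp(-\Omega(n^r))$ Chernoff failure probability uniformly over the family of all partitions of $\Points$ into parts of size $\ge L=\Omega(n^r)$; provided $r$ is chosen large enough relative to $\delta$ and $l_{\min}$, the $n^{O(n^{1-r})}$ combinatorial count of such partitions is dominated by the Chernoff tail, so the required simultaneous concentration over every feasible output clustering still holds with high probability.
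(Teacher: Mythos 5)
Your decomposition is the same as the paper's: the cost bound comes from showing that the optimal $\PFC$ clustering is, after sampling, feasible for the constructed $\DFCLB(k,p,L)$ instance (so $\OPT_{\DFCLB}\le\OPT_{\PFC}$, and Theorem~\ref{bera_theorem_lb} gives the $(\alpha+2)$ factor), and the violation bound comes from applying concentration to the clusters returned by the $\DFCLB$ solver. The first half of your argument is fine and matches the paper essentially verbatim; a small remark is that the failure probability $2\lvert\Colors\rvert n^{1-r}\exp(-\tfrac{\delta^2}{3}l_{\min}n^{r})$ vanishes for every fixed $r\in(0,1)$, so no condition relating $r$ to $\delta$ and $l_{\min}$ is needed there. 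You have also correctly put your finger on a subtlety in the second half that the paper simply elides: the returned assignment $\phi$ is a function of the sampled colors, so Theorem~\ref{whp_sample}, which is a statement about a fixed partition, does not directly apply to it. (The paper's proof just asserts ``the Chernoff bound guarantees the violation,'' without addressing this.)

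The genuine gap is in your proposed repair. The family of partitions of $\Points$ into parts of size at least $L=\Omega(n^{r})$ is not of size $n^{O(n^{1-r})}$: already the partitions into \emph{two} parts of size at least $L$ number $2^{\Omega(n)}$, and allowing up to $k\le n^{1-r}$ parts pushes the count to $\exp\bigl(\Omega(n\log n)\bigr)$ (on the order of $k^{n}$). Against a per-partition failure probability of $\exp(-\Theta(n^{r}))$ with $r<1$, the union bound over this family diverges for every admissible choice of $r$, so no setting of the parameters rescues the argument; the quantity $n^{O(n^{1-r})}$ you wrote down is the number of candidate \emph{center sets} $\binom{n}{k}\le n^{n^{1-r}}$, not the number of partitions, and in this problem the partition is not determined by the centers because the fair assignment is not nearest-center. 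To close the gap you would need something that restricts the adaptivity of $\phi$ --- e.g., exploiting that the center set $S$ from the color-blind step is independent of the sampling and arguing about the (sample-dependent) LP assignment directly, or validating the output against fresh randomness --- rather than uniform concentration over all feasible partitions.
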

\begin{proof}
First, given an instance $\mathcal{I}_{\PFC}$ of probabilistic fair clustering with optimal value $\mathrm{OPT}_{\PFC}$ the clusters in the optimal solution would with high probability be a valid solution for the deterministic setting, as showed in Theorem \ref{whp_sample}. Moreover the objective value of the solution is unchanged. Therefore, the resulting deterministic instance would have $\OPT_{\DFCLB} \leq  \OPT_{\PFC}$. Hence, the algorithm will return a solution with cost at most $(\alpha+2) \OPT_{\DFCLB} \leq  (\alpha+2) \OPT_{\PFC}$. 

For the solution $\mathrm{SOL}_{\DFCLB}$ returned by the algorithm, each cluster is of size at least $L$, and the Chernoff bound guarantees that the violation in expectation is at most $\epsilon$ with high probability. 

The run-time comes from the fact  that $\DFCLB$ is solved in $O(2^k \text{poly}(n))$ time. 
\end{proof}

\section{Experiments}\label{sec:experiments}
We now evaluate the performance of our algorithms over a collection of real-world datasets.  We give experiments in the two (unordered) color case (\S\ref{sec:experiments-two-color}), metric membership (i.e., ordered color) case (\S\ref{sec:experiments-metric}), as well as under the large cluster assumption (\S\ref{sec:experiments-large-cluster}).  We include experiments for the $k$-means case here, and the (qualitatively similar) $k$-center and $k$-median experiments to Appendix~\ref{further_exps}.

\subsection{Experimental Framework}\label{sec:experiments-framework}
\noindent\textbf{Hardware \& Software.}  We used only commodity hardware through the experiments: Python 3.6 on a MacBook Pro with 2.3GHz Intel Core i5 processor and 8GB 2133MHz LPDDR3 memory. A state-of-the-art commercial optimization toolkit, \texttt{CPLEX}~\cite{manual2016version}, was used for solving all linear programs (LPs). 
\texttt{NetworkX}~\cite{hagberg2013networkx} was used to solve minimum cost flow problems, and \texttt{Scikit-learn}~\cite{pedregosa2011scikit} was used for standard machine learning tasks such as training SVMs, pre-processing, and performing traditional $k$-means clustering. 

\noindent\textbf{Color-Blind Clustering.} The color-blind clustering algorithms we use are as follows.
\begin{itemize}[nosep]
\item \cite{gonzalez1985clustering} gives a $2$-approximation for $k$-center.
\item We use \texttt{Scikit-learn}'s $k$-means++ module.
\item We use the $5$-approximation algorithm due to \cite{arya2004local} modified with $D$-sampling \cite{arthur2006k} according to~\cite{bera2019fair}. 
\end{itemize}

\textbf{Generic-Experimental Setup and Measurements.} For a chosen dataset, a given color $\pcolor$ would have a proportion $f_{\pcolor}=\frac{|{v \in \Points| \chi(v)=\pcolor}|}{|\Points|}$. Following~\cite{bera2019fair}, the lower bound is set to $l_{\pcolor}=(1-\delta)r_{\pcolor}$ and the upper bound is to $u_{\pcolor}=\frac{f_{\pcolor}}{(1-\delta)}$. For metric membership, we similarly have $f=\frac{\sum_{\point \in \Points} r_{\point}}{|\Points|}$ as the proportion, $l=(1-\delta)f$ and $u=\frac{f}{1-\delta}$ as the lower and upper bound, respectively. We set $\delta=0.2$, as~\cite{bera2019fair} did, unless stated otherwise.

For each experiment, we measure the price of fairness $\POF=\frac{\textrm{Fair Solution Cost}}{\textrm{Color-Blind Cost}}$. We also measure the maximum additive violation $\gamma$ as it appears in inequalities \ref{viol_pfc} and \ref{viol_mm}.

\subsection{Two Color Case}\label{sec:experiments-two-color}
Here we test our algorithm for the case of two colors with probabilistic assignment. We use the \bank{} dataset \cite{moro2014data} which has 4,521 data points. We choose marital status, a categorical variable, as our fairness (color) attribute. To fit the binary color case, we merge single and divorced into one category.  Similar to the supervised learning work due to~\cite{awasthi2019effectiveness}, we make \bank{}'s deterministic color assignments probabilistic by independently perturbing them for each point with probability $\pn$. Specifically, if $\point$ originally had color $c_{\point}$, then now it has color $c_{\point}$ with probability $1-\pn$ instead. To make the results more interpretable, we define $\pacc=1-\pn$. Clearly, $\pacc=1$ corresponds to the deterministic case, and $\pacc=\frac{1}{2}$ corresponds to completely random assignments.  

First, in Fig.~\ref{2_color}(a), we see that the violations of the color-blind solution can be as large as 25 whereas our algorithm is within the theoretical guarantee that is less than 1. In Fig.~\ref{2_color}(b), we see that in spite of the large violation, fairness can be achieved at a low relative efficiency loss, not exceeding 2\% ($\POF{} \leq 1.02$).

\begin{figure}[h!t]
\centering
\centerline{\includegraphics[width=0.9\columnwidth]{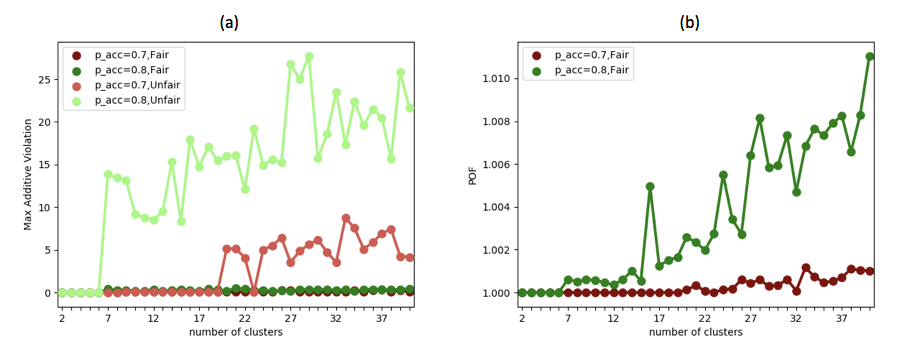}}
\caption{For $\pacc=0.7$ \& $\pacc=0.8$, showing (a): \#clusters vs.\ maximum additive violation; (b): \#clusters vs.\ $\POF{}$.}\label{2_color}
\end{figure}

How does labeling accuracy level $\pacc$ impact this problem? Fig.~\ref{p_acc_vs_POF} shows $\pacc$ vs $\POF$ for $\delta=0.2$ and $\delta=0.1$. At $\pacc=\frac{1}{2}$, color assignments are completely random and the cost is, as expected, identical to color-blind cost. As $\pacc$ increases, the colors of the vertices become more differentiated, causing $\POF$ to increase, eventually reaching the maximum at $\pacc=1$ which is the deterministic case.

\begin{figure}[h!t]
\centering
\centerline{\includegraphics[width=\columnwidth]{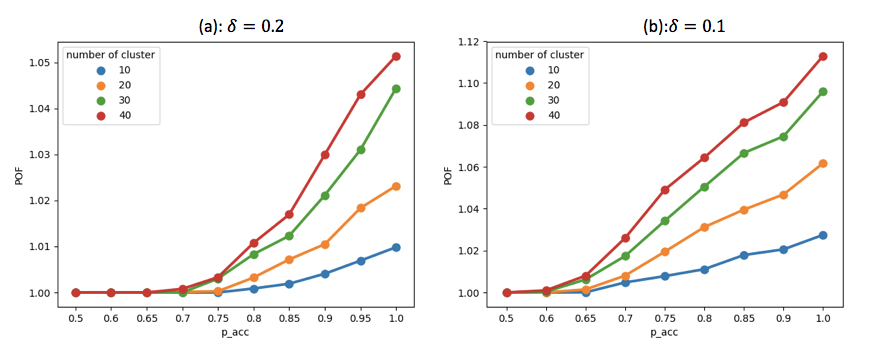}}
\caption{Plot showing $\pacc$ vs $\POF$, (a):$\delta=0.2$ and (b):$\delta=0.1$.}
\label{p_acc_vs_POF}
\end{figure}

Next, we test against an ``obvious'' strategy when faced with probabilistic color labels: simply \emph{threshold} the probability values, and then run a deterministic fair clustering algorithm. Fig.~\ref{thresh_pfc}(a) shows that this may indeed work for guaranteeing fairness, as the proportions may be satisfied with small violations; however, it comes at the expense of a much higher $\POF$.  Fig.~\ref{thresh_pfc}(b) supports this latter statement: our algorithm can achieve the same violations with smaller $\POF$. Further, running a deterministic algorithm over the thresholded instance may result in an infeasible problem.\footnote{An intuitive example of infeasibility: consider the two color case where $p_{\point}=\frac{1}{2}+\epsilon, \forall \point \in \Points$ for some small positive $\epsilon$. Thresholding drastically changes the overall probability to $1$; therefore no subset of points would have proportion around $\frac{1}{2}+\epsilon$.} 
 
\begin{figure}[!ht]
\centering
\centerline{\includegraphics[width=\columnwidth]{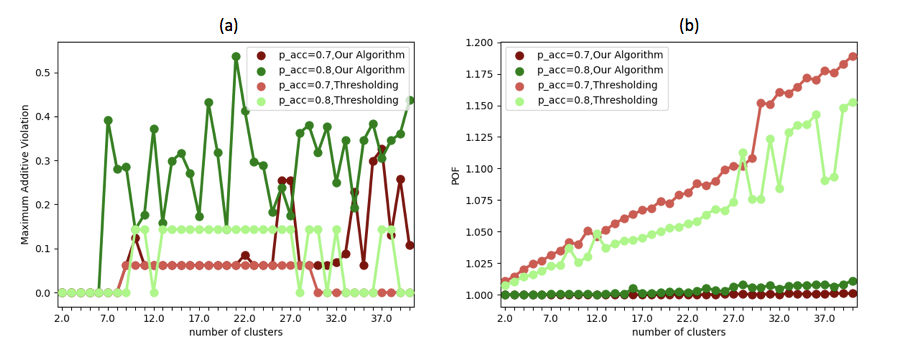}}
\caption{Comparing our algorithm to thresholding followed by deterministic fair clustering: (a)maximum violation, (b) $\POF$.}
\label{thresh_pfc}
\end{figure}

\subsection{Metric Membership}\label{sec:experiments-metric}
Here we test our algorithm for the metric membership problem. We use two additional well-known datasets: $\adult{}$~\cite{kohavi1996scaling}, with age being the fairness attribute, and $\credit{}$~\cite{yeh2009comparisons}, with credit being the fairness attribute. We apply a pre-processing step where for each point we subtract the minimum value of the fairness attribute over the entire set. This has the affect of reducing the maximum fairness attribute value, therefore reducing the maximum possible violation of $\frac{1}{2}R$, but still keeping the values non-negative. 

Fig.~\ref{POF__exp_mm} shows $\POF$ with respect to the number of clusters. For the \adult{} dataset, $\POF$ is at most less than 5\%, whereas for the \credit{} dataset it is as high at 25\%. While the $\POF$, intuitively, rises with the number of clusters allowed, it is substantially higher with the \credit{} dataset.  This may be explained because of the correlation that exists between credit and other features represented in the metric space.

\begin{figure}[!ht]
\vskip 0.2in
\begin{center}
\centerline{\includegraphics[width=0.7\columnwidth]{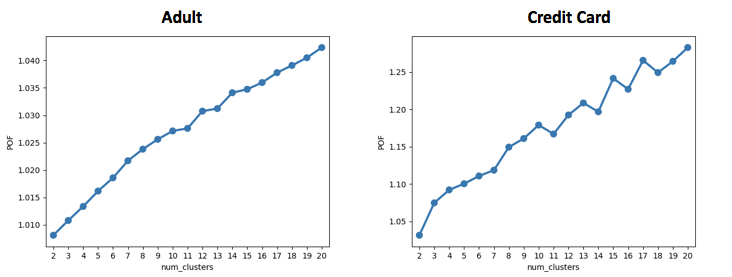}}
\caption{Plot showing the number of clusters vs $\POF$}
\label{POF__exp_mm}
\end{center}
\vskip -0.2in
\end{figure}

In Fig.~\ref{viol_exp_mm}, we compare the number of clusters against the normalized maximum additive violation. The normalized maximum additive violation is the same maximum additive violation $\gamma$ from inequality \ref{viol_mm}---but normalized by $R$. We see that the normalized maximum additive violation is indeed less than $2$ as theoretically guaranteed by our algorithm, whereas for the color-blind solution it is as high a $250$. 
\begin{figure}[!ht]
\vskip 0.2in
\begin{center}
\centerline{\includegraphics[width=0.7\columnwidth]{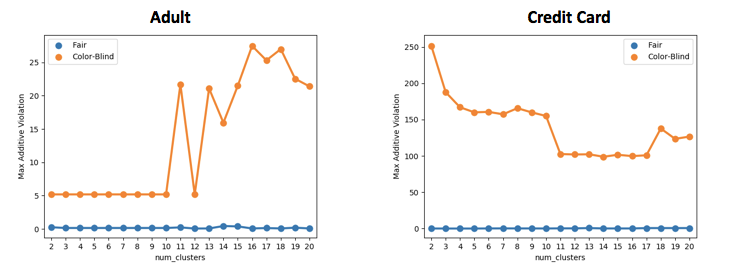}}
\caption{Plot showing the number of clusters vs the normalized maximum additive violation}
\label{viol_exp_mm}
\end{center}
\vskip -0.2in
\end{figure}

\subsection{The Large Cluster Assumption}\label{sec:experiments-large-cluster}

Here we test our algorithm for the case of probabilistically assigned multiple colors under Assumption~\ref{lc_assumption}, which addresses cases where the optimal clustering does not include pathologically small clusters. We use the $\cens$~\cite{meek2002learning} dataset.  We note that $\cens$ is large, with over 2.4 million points.  We use age groups (attribute \texttt{dAge} in the dataset) as our fairness attribute, which yields $7$ age groups (colors).\footnote{Group $0$ is extremely rare, to the point that it violates the ``large cluster'' assumption for most experiments; therefore, we merged it with Group $1$, its nearest age group.}  We then sample 100,000 data points and use them to train an SVM classifier\footnote{We followed standard procedures and ended up with a standard RBF-based SVM; the accuracy of this SVM is somewhat orthogonal to the message of this paper, and rather serves to illustrate a real-world, noisy labeler.} to predict the age group memberships. The classifier achieves an accuracy of around 68\%. We use the classifier to predict the memberships of another 100,000 points not included in the training set, and sample from that to form the probabilistic assignment of colors. Although as stated earlier we should try all possible combinations in closing and opening the color-blind centers, we keep all centers as they are. It is expected that this heuristic would not lead to a much higher cost if the dataset and the choice of the color-blind centers is sufficiently well-behaved. 

Fig.~\ref{exp_lc} shows the output of our large cluster algorithm over 100,000 points and $k=5$ clusters with varying lower bound assumptions. Since the clusters here are large, we normalize the additive violations by the cluster size. We see that our algorithm results in normalized violation that decrease as the lower bound on the cluster size increases. The $\POF$ is high relative to our previous experiments, but still less than 50\%.

\begin{figure}[H]
\begin{center}
\centerline{\includegraphics[width=.9\columnwidth]{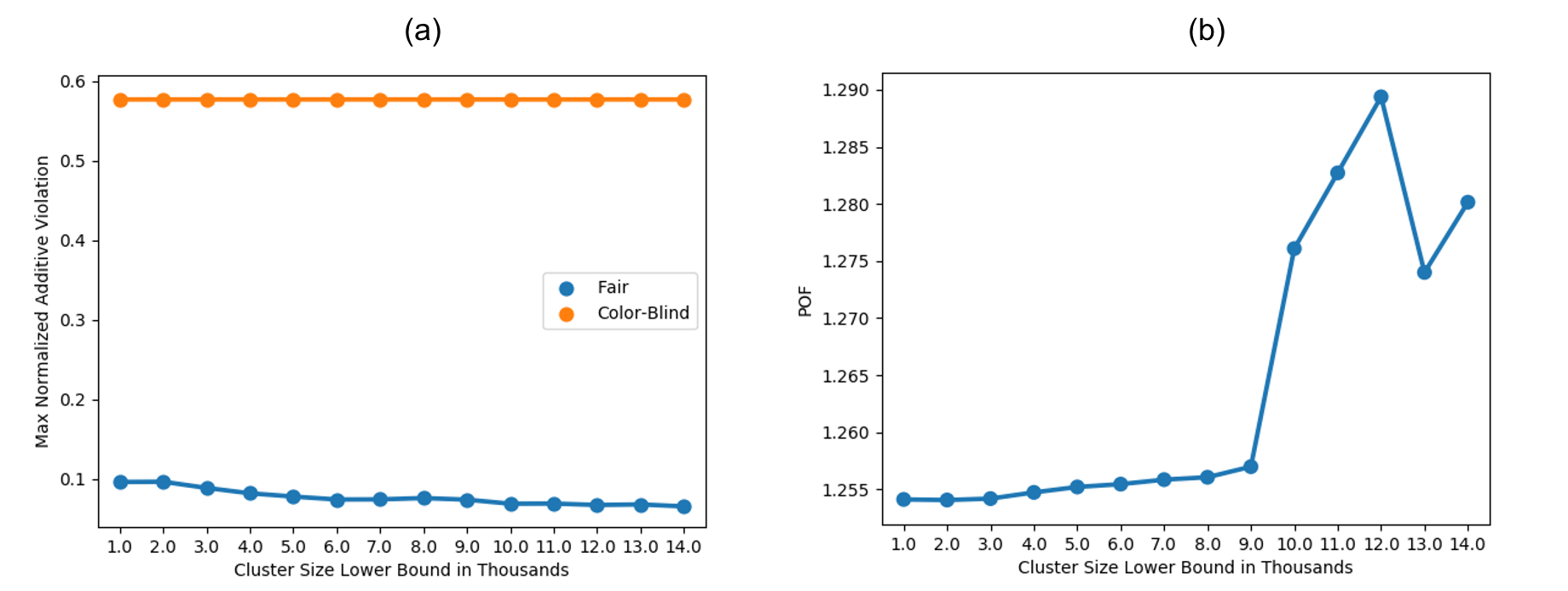}}
\caption{Plot showing the performance of our independent sampling algorithm over the $\cens$ dataset for $k=5$ clusters with varying values on the cluster size lower bound:(a)maximum violation normalized by the cluster size, (b)the price of fairness.}
\label{exp_lc}
\end{center}
\end{figure}

\section{Conclusions \& Future Research}\label{sec:conclusions}
Prior research in fair clustering assumes deterministic knowledge of group membership. We generalized prior work by assuming probabilistic knowledge of group membership. In this new model, we presented novel clustering algorithms in this more general setting with approximation ratio guarantees.  We also addressed the problem of ``metric membership,'' where different groups have a notion of order and distance---this addresses real-world use cases where parity must be ensured over, e.g., age or income. We also conducted experiments on slate of datasets.  The algorithms we propose come with strong theoretical guarantees; on real-world data, we showed that those guarantees are easily met.
Future research directions involve the assignment of \emph{multiple colors} (e.g., race as well as self-reported gender) to vertices, in addition to the removal of assumptions such as the large cluster assumption.


\nocite{langley00}

\section{Broader Impact}

Guaranteeing that the color proportions are maintained in each cluster satisfies group (demographic) fairness in clustering. In real-world scenarios, however, group membership may not be known with certainty but rather probabilistically (e.g., learned by way of a machine learning model). Our paper addresses fair clustering in such a scenario and therefore both generalizes that particular (and well-known) problem statement and widens the scope of the application.  In settings where a group-fairness-aware clustering algorithm is appropriate to deploy, we believe our work could increase the robustness of those systems.  That said, we do note (at least) two broader points of discussion that arise when placing potential applications of our work in the greater context of society:
\begin{itemize}
    \item We address a specific definition of fairness.  While the formalization we address is a common one that draws directly on legal doctrine such as the notion of disparate impact, as expressed by~\cite{Feldman15:Certifying} and others, we note that the Fairness, Accountability, Transparancy, and Ethics (FATE) in machine learning community has identified \emph{many} such definitions~\cite{Verma18:Fairness}.  Yet, there is a growing body of work exploring the gaps between the FATE-style definitions of fairness and those desired in industry (see, e.g., recent work due to~\citet{Holstein19:Improving} that interviews developers about their wants and needs in this space), and there is growing evidence that stakeholders may not even comprehend those definitions in the first place~\citet{Saha20:Measuring}.  Indeed, ``deciding on a definition of fairness'' is an inherently morally-laden, application-specific decision, and we acknowledge that making a prescriptive statement about whether or not our model is \emph{appropriate} for a particular use case is the purview of both technicians, such as ourselves, and policymakers and/or other stakeholders.
    \item Our work is motivated by the assumption that, in many real-world settings, group membership may not be known deterministically.  If group membership is being estimated by a machine-learning-based model, then it is likely that this estimator itself could incorporate bias into the membership estimate; thus, our final clustering could also reflect that bias.  As an example, take a bank in the United States; here, it may not be legal for a bank to store information on sensitive attributes---a fact made known recently by the ``Apple Card'' fiasco of late 2019~\cite{Knight19:Apple}.  Thus, to audit algorithms for bias, it may be the case that either the bank or a third-party service infers sensitive attributes from past data, which likely introduces bias into the group membership estimate itself.  (See recent work due to~\citet{Chen19:Fairness} for an in-depth discussion from the point of view of an industry-academic team.)
\end{itemize}
We have tried to present this work without making normative statements about, e.g., the definition of fairness used; still, we emphasize the importance of open dialog with stakeholders in any system, and acknowledge that our proposed approach serves as one part of a larger application ecosystem.

\section*{Acknowledgments}
Dickerson and Esmaeili were supported in part by NSF CAREER Award IIS-1846237, DARPA GARD Award \#HR112020007, DARPA SI3-CMD Award \#S4761, DoD WHS Award \#HQ003420F0035, NIH R01 Award NLM-013039-01, and a Google Faculty Research Award.  We thank Keegan Hines for discussion about ``fairness under unawareness'' in practical settings, and for pointers to related literature.

%
%
%
\newpage
 \bibliographystyle{plainnat}
 \bibliography{arxiv_draft}
 
\newpage

\appendix
\section{Omitted Proofs}
\subsection{Proof of Theorem \ref{bera_theorem}}\label{p1}
\begin{proof}
 Let $\instpfc$ a given instance of $\PFC(k,p)$, $SOL_{\PFC}=(S^*_{\PFC},\phi^*_{\PFC})$ the optimal solution of $\instpfc$ and $\OPTPFC$ its corresponding optimal value.
 Also, for $\clust(k,p)$ and for any instance of it, the optimal value is denoted by $\mathrm{OPT}_{\clust}$ and the corresponding solution by $\mathrm{SOL}_{\clust}=(S^*_{\clust},\phi^*_{\clust})$.

The proof closely follows that from \cite{bera2019fair}. First running the color-blind $\alpha$ approximation algorithm results in a set of centers $S$, an assignment $\phi$, and a solution value that is at most $\alpha \mathrm{OPT}_{\clust} \leq \alpha \OPTPFC$. Note that $\mathrm{OPT}_{\clust} \leq \OPTPFC$ since $\PFC(k,p)$ is a more constrained problem than $\clust(k,p)$. Now we establish the following lemma:
\begin{lemma}\label{lemma_for_bera}
$\mathrm{OPT}_{\FAIRAPFC} \leq (\alpha+2) \OPT_{\PFC}$
\end{lemma}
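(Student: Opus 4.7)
The plan is to exhibit, using the optimal fair clustering together with the color-blind solution, a feasible assignment $\phi'$ of $\Points$ to the centers $S$ produced by the color-blind $\alpha$-approximation, whose $L^k_p$ cost is bounded by $(\alpha+2)\,\OPT_{\PFC}$; this immediately gives $\mathrm{OPT}_{\FAIRAPFC} \le (\alpha+2)\,\OPT_{\PFC}$. Specifically, for each point $\point \in \Points$ I would set $\phi'(\point)$ to be the center in $S$ nearest to $\phi^*_{\PFC}(\point)$. The key observation is that any two points with the same image under $\phi^*_{\PFC}$ also have the same image under $\phi'$, so the set-partition of $\Points$ induced by $\phi'$ coincides with the one induced by $\phi^*_{\PFC}$ — only the center labels have changed. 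Because constraints \ref{pfc_eq_2} depend only on this set-partition (the sums $\sum_{\point \in \phi^{-1}(i)} p^{\pcolor}_{\point}$ are functions of the cluster as a set of points), $\phi'$ automatically satisfies the probabilistic fairness constraints and is therefore feasible for $\FAIRAPFC(S,p)$; the same reasoning applies in the metric-membership setting with $p^{\pcolor}_{\point}$ replaced by $r_{\point}$.

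Next I would bound the cost of $\phi'$ pointwise via the triangle inequality. Since $\phi'(\point)$ is by construction the nearest element of $S$ to $\phi^*_{\PFC}(\point)$, and $\phi(\point) \in S$ is a candidate for that minimum, we have $d(\phi^*_{\PFC}(\point),\phi'(\point)) \le d(\phi^*_{\PFC}(\point),\phi(\point)) \le d(\point,\phi^*_{\PFC}(\point)) + d(\point,\phi(\point))$. Combining this with $d(\point,\phi'(\point)) \le d(\point,\phi^*_{\PFC}(\point)) + d(\phi^*_{\PFC}(\point),\phi'(\point))$ yields the pointwise bound $d(\point,\phi'(\point)) \le 2\,d(\point,\phi^*_{\PFC}(\point)) + d(\point,\phi(\point))$.

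To lift this to the $L^k_p$ objective I would apply Minkowski's inequality in $\ell_p$ (which degenerates to the ordinary triangle inequality on a maximum when $p=\infty$, handling the $k$-center case). Taking $\ell_p$-norms over $\point \in \Points$, the pointwise bound gives that the cost of $\phi'$ is at most $2\,L^k_p(\Points;\phi^*_{\PFC}) + L^k_p(\Points;\phi) \le 2\,\OPT_{\PFC} + \alpha\,\OPT_{\clust} \le (\alpha+2)\,\OPT_{\PFC}$, where I use that $\phi^*_{\PFC}$ is optimal for $\PFC(k,p)$, that the color-blind algorithm produces a solution of cost at most $\alpha\,\OPT_{\clust}$, and that $\OPT_{\clust} \le \OPT_{\PFC}$ because $\clust(k,p)$ drops the fairness constraints. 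Since $\phi'$ is feasible for $\FAIRAPFC(S,p)$, this chain upper-bounds $\mathrm{OPT}_{\FAIRAPFC}$ and establishes the lemma. The most delicate step is the re-labeling argument — verifying that the expected-color sums per cluster are invariant under the change of center labels, so that the probabilistic fairness constraints are preserved in passing from $\phi^*_{\PFC}$ to $\phi'$ — as this is what makes the fair-assignment LP feasible with objective value governed only by the distance bookkeeping; the remainder is a mechanical application of Minkowski.
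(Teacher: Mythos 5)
Your cost analysis is exactly the paper's: the same reassignment $\phi'(\point)=\argmin_{i\in S} d(i,\phi^*_{\PFC}(\point))$, the same pointwise bound $d(\point,\phi'(\point))\le 2d(\point,\phi^*_{\PFC}(\point))+d(\point,\phi(\point))$, and the same Minkowski lift to $L^k_p$. The gap is in your feasibility argument. You observe, correctly, that two points with the same image under $\phi^*_{\PFC}$ have the same image under $\phi'$, but you then conclude that the partition induced by $\phi'$ \emph{coincides} with the one induced by $\phi^*_{\PFC}$ up to relabeling of centers. That does not follow, and it is false in general: the map $j\mapsto\argmin_{i'\in S}d(i',j)$ from optimal centers to color-blind centers need not be injective, so several optimal clusters can be routed to the same center of $S$ and merged. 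The $\phi'$-partition is a \emph{coarsening} of the $\phi^*_{\PFC}$-partition, not the same partition, so you cannot simply say the per-cluster sums are unchanged.

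The constraints \ref{pfc_eq_2} are ratio constraints, so feasibility of $\phi'$ requires one more step, which is exactly what the paper supplies: writing $N(i)=\{j\in S^*_{\PFC}\mid \argmin_{i'\in S}d(i',j)=i\}$, the color fraction of the merged cluster is
\begin{equation*}
\frac{\sum_{j\in N(i)}\sum_{\point\in{\phi^*}^{-1}_{\PFC}(j)}p^{\pcolor}_{\point}}{\sum_{j\in N(i)}\lvert{\phi^*}^{-1}_{\PFC}(j)\rvert},
\end{equation*}
a weighted average (mediant) of the fractions of the constituent optimal clusters, and therefore lies between their minimum and maximum, hence in $[l_{\pcolor},u_{\pcolor}]$. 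This averaging step is easy but it is the actual content of the feasibility claim; with it inserted in place of your relabeling assertion, your proof matches the paper's.
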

\begin{proof}
The lemma is established by finding the instance satisfying the inequality. Let $\phi'(\point)=\argmin_{i \in S} d(i,\phi^*_{\PFC}(\point))$, i.e. an assignment that routes the vertices from the optimal center to the nearest center in color-blind solution $S$. For any point $\point$ the following holds:
\begin{align*}
    d(\point,\phi'(\point)) & \leq d(\point,\phi^*_{\PFC}(\point)) + d(\phi^*_{\PFC}(\point),\phi'(\point)) \\
    & \leq d(\point,\phi^*_{\PFC}(\point)) +  d(\phi^*_{\PFC}(\point),\phi(\point)) \\
    & \leq d(\point,\phi^*_{\PFC}(\point)) +  d(\point,\phi^*_{\PFC}(\point)) + d(\point,\phi(\point)) \\
    & = 2d(\point,\phi^*_{\PFC}(\point)) + d(\point,\phi(\point))
\end{align*}
stacking the distance values in the vectors $\vec{d}(\point,\phi'(\point)),\vec{d}(\point,\phi^*_{\PFC}(\point)),\text{and } \vec{d}(\point,\phi(\point))$. By the virtue of the fact that $\big(\sum_{\point \in \Points} x^p(\point)\big)^{1/p}$ is the $\ell_p$-norm of the associated vector $\vec{x}$ and since each entry in $\vec{d}(\point,\phi'(\point))$ is non-negative, the triangular inequality for norms implies:
\begin{align*}
    & \big(\sum_{\point \in \Points} d^p(\point,\phi'(\point))\big)^{1/p} \leq 2\big(\sum_{\point \in \Points} d^p(\point,\phi^*_{\PFC}(\point))\big)^{1/p} \\
    & + \big(\sum_{\point \in \Points} d^p(\point,\phi(\point))\big)^{1/p}
\end{align*}
It remains to show that $\phi'$ satisfies the fairness constraints \ref{pfc_eq_2}, for any color $\pcolor$ and any center $i$ in $S$, denote $N(i)=\{j \in S^*_{\PFC}| \argmin_{i' \in S} d(i',j)=i\}$, then we have:
\begin{align*}
    \frac{\sum_{\point \in {\phi'}^{-1}(i)} p^{\pcolor}_{\point}}{|{\phi'}^{-1}(i)|} = \frac{ \sum_{j \in N(i)} \Big( \sum_{\point \in {\phi^*}^{-1}_{\PFC}(j)}  p^{\pcolor}_{\point}  \Big) }{\sum_{j \in N(i)} |{\phi^*}^{-1}_{\PFC}(j)|}
\end{align*}
It follows by algebra and the lower and upper fairness constrain bounds satisfied by $\phi^*_{\PFC}$:
\begin{align*}\label{proprtion_bound_eq}
 l_{\pcolor} & \leq  \min_{j \in N(i)} \frac{  \Big( \sum_{\point \in {\phi^*}^{-1}_{\PFC}(j)}  p^{\pcolor}_{\point}  \Big) }{ |{\phi^*}^{-1}_{\PFC}(j)|} \\ 
& \leq \frac{ \sum_{j \in N(i)} \Big( \sum_{\point \in {\phi^*}^{-1}_{\PFC}(j)}  p^{\pcolor}_{\point}  \Big) }{\sum_{j \in N(i)} |{\phi^*}^{-1}_{\PFC}(j)|}  \\
& \leq \max_{j \in N(i)} \frac{  \Big( \sum_{\point \in {\phi^*}^{-1}_{\PFC}(j)}  p^{\pcolor}_{\point}  \Big) }{ |{\phi^*}^{-1}_{\PFC}(j)|} \\ 
& \leq u_{\pcolor}
\end{align*}
This shows that there exists an instance for $\FAIRAPFC$ that both satisfies the fairness constraints and has cost $\leq 2 \OPTPFC + \alpha \mathrm{OPT}_{\clust} \leq  (\alpha+2) \OPTPFC$. 
\end{proof}
Now combining the fact that we have an $\alpha$ approximation ratio for the color-blind problem, along with an algorithm that achieves a $\gamma$ violation to $\FAIRAPFC$ with a value equal to the optimal value for $\FAIRAPFC$, the proof for theorem \ref{bera_theorem} is complete.
 \end{proof}

\subsection{General Theorem for Lower Bounded Deterministic Fair Clustering}\label{bera_lb_proof}
Before stating the theorem and proof, we introduce some definitions. Let $\FAIRAPFCLB$ denote the fair assignment problem with lower bounded cluster sizes. Specifically, in $\FAIRAPFCLB(S,p,L)$ we are given a set of clusters $S$ and we seek to find an assignment $\phi:\Points\rightarrow S$ so that the fairness constraints \ref{dfclb_eq_2} are satisfied, in addition to constraint \ref{dfclb_eq_3} for lower bounding the cluster size by at least $L$. 

Note that although we care about the deterministic case, the statement and proof hold for the probabilistic case. Since the deterministic case is a special case of the probabilistic, the proof follows for the deterministic case as well. 

\begin{theorem}\label{bera_theorem_lb}
Given an $\alpha$ approximation algorithm for the color blind clustering problem $\mathrm{Cluster}(k,p)$ and a $\gamma$ violating algorithm for the fair assignment problem with lower bounded cluster sizes $\FAIRAPFCLB(S,p,L)$, a solution with approximation ratio $\alpha+2$ and violation at most $\gamma$ can be achieved for the deterministic fair clustering problem with lower bounded cluster size $\DFCLB(k,p)$ in time that is fixed parameter tractable $O(2^k \text{poly}(n))$. 
\end{theorem}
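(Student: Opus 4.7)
The plan is to mirror the structure of the proof of Theorem \ref{bera_theorem} as closely as possible, with one new verification needed: that the reassignment used in the existence argument preserves the lower-bound constraint \ref{dfclb_eq_3} in addition to the fairness constraints. First I would run the color-blind $\alpha$-approximation algorithm to obtain a set $S$ of at most $k$ centers together with an assignment $\phi$ whose $L_p$-cost is at most $\alpha \cdot \OPT_{\clust} \leq \alpha \cdot \OPT_{\DFCLB}$; the second inequality holds because $\DFCLB(k,p)$ is a more constrained problem than $\clust(k,p)$.

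Next I would establish the analogue of Lemma \ref{lemma_for_bera}, namely $\OPT_{\FAIRAPFCLB}(S,p,L) \leq (\alpha+2)\,\OPT_{\DFCLB}$. Let $(S^*_{\DFCLB}, \phi^*_{\DFCLB})$ denote an optimal solution for $\DFCLB(k,p)$, and define the rerouted assignment $\phi'(\point) = \argmin_{i \in S} d(i, \phi^*_{\DFCLB}(\point))$ that sends every point to the color-blind center closest to its optimal center. The cost bound $\big(\sum_\point d^p(\point,\phi'(\point))\big)^{1/p} \leq 2\big(\sum_\point d^p(\point,\phi^*_{\DFCLB}(\point))\big)^{1/p} + \big(\sum_\point d^p(\point,\phi(\point))\big)^{1/p}$ follows from the same triangle-inequality plus $\ell_p$-norm argument as in Lemma \ref{lemma_for_bera}, and the proportion-averaging argument with $N(i) = \{j \in S^*_{\DFCLB} \mid \argmin_{i' \in S} d(i',j) = i\}$ shows that the relaxed fairness bounds of \ref{dfclb_eq_2} are preserved. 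The new piece is the lower bound: since $|{\phi'}^{-1}(i)| = \sum_{j \in N(i)} |{\phi^*_{\DFCLB}}^{-1}(j)|$ and every term in this sum is at least $L$ by feasibility of $\phi^*_{\DFCLB}$, we immediately get $|{\phi'}^{-1}(i)| \geq L$ whenever $N(i) \neq \emptyset$.

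To handle the corner case of color-blind centers with $N(i) = \emptyset$, I would simply drop such centers from $S$ before invoking $\FAIRAPFCLB$; this only reduces the number of centers used (still at most $k$) and cannot increase cost, so the pruned assignment is feasible for $\FAIRAPFCLB(S',p,L)$ on the pruned set $S'$, with cost at most $2\,\OPT_{\DFCLB} + \alpha\,\OPT_{\clust} \leq (\alpha+2)\,\OPT_{\DFCLB}$. Finally, applying the given $\gamma$-violating algorithm for $\FAIRAPFCLB(S,p,L)$ to the color-blind output yields an assignment whose objective is at most $\OPT_{\FAIRAPFCLB}(S,p,L) \leq (\alpha+2)\,\OPT_{\DFCLB}$ while violating the fairness constraints by at most $\gamma$, completing the proof. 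The main obstacle I anticipate is entirely conceptual rather than technical: one must notice that merging multiple optimal clusters into a single color-blind cluster can only \emph{increase} cluster size and therefore never jeopardizes the lower-bound constraint, and that unused color-blind centers can be discarded without cost. Once these points are in hand, the remainder of the argument is a direct adaptation of the proof of Theorem \ref{bera_theorem}.
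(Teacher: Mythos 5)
Your proposal is correct and follows essentially the same route as the paper's own proof: run the color-blind algorithm, reroute each point from its optimal $\DFCLB$ center to the nearest color-blind center, reuse the triangle-inequality and proportion-averaging arguments from Lemma \ref{lemma_for_bera}, and observe that merging whole optimal clusters only increases cluster sizes so the lower bound $L$ is preserved. The paper handles the $N(i)=\emptyset$ corner case exactly as you do, by omitting centers that receive no vertices.
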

\begin{proof}
First running the color-blind $\alpha$ approximation algorithm results in a set of centers $S$, an assignment $\phi$, and a solution value that is at most $\alpha \mathrm{OPT}_{\clust} \leq \alpha \OPT_{\DFCLB}$.

Now we establish the equivalent to lemma \ref{lemma_for_bera} for this problem:
\begin{lemma}\label{lb_appendix}
For the fair assignment problem with lower bounded cluster sizes $\FAIRAPFCLB$, we can obtain a solution of cost at most $(\alpha+2) \OPT_{\DFCLB}$ in fixed-parameter tractable time $O(2^k \text{poly(n)})$. 
\end{lemma}
\begin{proof}
The proof is very similar to the proof for lemma \ref{lemma_for_bera}. Letting $SOL^*_{\DFCLB}=(S^*_{\DFCLB},\phi^*_{\DFCLB})$ denote the optimal solution to $\DFCLB$ with optimal value $\OPT_{\DFCLB}$. Similarly, define the assignment $\phi'(\point)=\argmin_{i \in S} d(i,\phi^*_{\DFCLB}(\point))$, i.e. an assignment which routs vertices from the optimal center to the closest center in the color-blind solution. By identical arguments to those in the proof of lemma \ref{lemma_for_bera}, it follows that:
\begin{align*}
    & \big(\sum_{\point \in \Points} d^p(\point,\phi'(\point))\big)^{1/p} \leq 2\big(\sum_{\point \in \Points} d^p(\point,\phi^*_{\DFCLB}(\point))\big)^{1/p} \\
    & + \big(\sum_{\point \in \Points} d^p(\point,\phi(\point))\big)^{1/p}\\ 
    & \text{ and that:}\\ 
    & l_{\pcolor} \leq \frac{\sum_{\point \in {\phi'}^{-1}(i)} p^{\pcolor}_{\point}}{|{\phi'}^{-1}(i)|} \leq u_{\pcolor}
\end{align*}
What remains is to show that each cluster is lower bounded by $L$. Here we note that a center in $S$ will either be allocated the vertices of one or more centers in $S^*_{\DFCLB}$ or it would not be allocated any vertices at all. If it is not allocated any vertices, then it is omitted as a center (since no vertices are assigned to it). If vertices for a center or more are routed to it, then it will have a cluster of size $\sum_{j \in N(i)} |{\phi^*}^{-1}_{\DFCLB}(j)| \ge L$. This follows since any center in the optimal solution to $\DFCLB$ must satisfy the lower bound $L$. The issue is that we do not know if a color-blind center is not allocated any vertices and should be omitted. However, we can try all possible close and open combinations for the color-blind centers and solve the $\FAIRAPFCLB$ for each combination. This can be done in $O(2^k \text{poly(n)})$ time (fixed-parameter tractable).  
\end{proof}
Now combining the fact that we have an $\alpha$ approximation ratio for the color-blind problem, along with an algorithm that achieves a $\gamma$ violation to $\FAIRAPFCLB$ with value equal to the optimal value for $\FAIRAPFCLB$, the proof for theorem \ref{bera_lb_proof} is complete.
\end{proof}

\section{Further details on Independent Sampling and Large Cluster Solution}\label{is_details}
Here we introduce more details about independent sampling. In section \ref{is_cb} we discuss the concentration bounds associated with the algorithm.  In section \ref{relax_lc} we show that relaxing the upper and lower bounds might be necessary for the algorithm to have a high probability of success. Finally, in section \ref{lower_bounding_lc} we show that not enforcing a lower bound when solving the deterministic fair instance may lead to invalid solutions. 
\subsection{Independent Sampling and the Resulting Concentration Bounds}\label{is_cb}
We recall the Chernoff bound theorem for the sum of a collection of independent random variables. 
\begin{theorem}\label{chernoff}
    Given a collection of $n$ many binary random variables where $\Pr[X_j=1]=p_{j}$ and $S=\sum_{j=1}^{n} X_j$. Then $\mu=\E[S]=\sum_{j=1}^{n}p_j$ and the following concentration bound holds for $\delta \in (0,1)$:
    \begin{align}
        \Pr(\abs{S-\mu} > \delta \mu) \leq 2 e^{-\mu\delta^2/3}
    \end{align}
\end{theorem}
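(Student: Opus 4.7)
The plan is to prove the two-sided Chernoff bound by the standard moment generating function (MGF) technique, handling the upper and lower tails separately and combining them via a union bound. I would first observe that it suffices to show each of
\[
\Pr(S \geq (1+\delta)\mu) \leq e^{-\mu\delta^2/3}, \qquad \Pr(S \leq (1-\delta)\mu) \leq e^{-\mu\delta^2/3},
\]
because $\{|S-\mu|>\delta\mu\}$ is contained in the union of these two events and a union bound then supplies the factor of $2$.

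For the upper tail I would introduce a parameter $t>0$, apply Markov's inequality to $e^{tS}$ to get $\Pr(S \geq (1+\delta)\mu) \leq e^{-t(1+\delta)\mu}\E[e^{tS}]$, and then exploit independence to factor $\E[e^{tS}] = \prod_{j=1}^n \E[e^{tX_j}]$. For each Bernoulli $X_j$ I would compute $\E[e^{tX_j}] = 1 + p_j(e^t-1) \leq \exp\!\bigl(p_j(e^t-1)\bigr)$ using the elementary inequality $1+x \leq e^x$, yielding $\E[e^{tS}] \leq \exp\!\bigl(\mu(e^t-1)\bigr)$. Optimizing by choosing $t = \ln(1+\delta)$ gives the classical bound
\[
\Pr(S \geq (1+\delta)\mu) \leq \left(\frac{e^{\delta}}{(1+\delta)^{1+\delta}}\right)^{\mu}.
\]
An entirely analogous argument with $t<0$ (equivalently, applying Markov to $e^{-tS}$ with $t>0$) produces $\Pr(S \leq (1-\delta)\mu) \leq \bigl(e^{-\delta}/(1-\delta)^{1-\delta}\bigr)^{\mu}$ for the lower tail.

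The step I expect to be the main obstacle is the analytic simplification of these two explicit expressions into the clean exponential form $e^{-\mu\delta^2/3}$. For this I would prove the inequalities
\[
\frac{e^{\delta}}{(1+\delta)^{1+\delta}} \leq e^{-\delta^2/3} \quad \text{and} \quad \frac{e^{-\delta}}{(1-\delta)^{1-\delta}} \leq e^{-\delta^2/3} \qquad \text{for } \delta \in (0,1),
\]
by defining $f(\delta) = \delta - (1+\delta)\ln(1+\delta) + \delta^2/3$ (and its lower-tail analog), verifying $f(0)=0$, and showing $f'(\delta) \leq 0$ on $(0,1)$ via a Taylor expansion of $\ln(1+\delta)$. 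Raising to the $\mu$-th power preserves the inequality since $\mu \geq 0$, so each tail is at most $e^{-\mu\delta^2/3}$.

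Finally, combining the two tail bounds by the union bound yields $\Pr(|S-\mu| > \delta\mu) \leq 2e^{-\mu\delta^2/3}$, completing the proof. The probabilistic core of the argument (Markov plus independence plus the Bernoulli MGF estimate) is routine; the only delicate part is the calculus lemma that converts the raw MGF bound into the stated exponential form, and that is where I would concentrate the care.
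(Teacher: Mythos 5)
The paper contains no proof of this statement to compare against: Theorem~\ref{chernoff} is introduced with the words ``We recall the Chernoff bound theorem,'' i.e., it is quoted as a known standard fact (the Mitzenmacher--Upfal form of the multiplicative Chernoff bound for independent Bernoulli sums). Your blind proposal is exactly the standard proof of that fact --- Markov's inequality applied to $e^{tS}$, independence to factor $\E[e^{tS}]=\prod_j \E[e^{tX_j}]$, the estimate $\E[e^{tX_j}]=1+p_j(e^t-1)\le e^{p_j(e^t-1)}$, the choice $t=\ln(1+\delta)$ (and a negative $t$ for the lower tail), the calculus lemma reducing $e^{\delta}/(1+\delta)^{1+\delta}\le e^{-\delta^2/3}$ and $e^{-\delta}/(1-\delta)^{1-\delta}\le e^{-\delta^2/2}\le e^{-\delta^2/3}$, and a union bound --- and the outline is correct. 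One remark: the theorem statement omits the word ``independent,'' but independence is clearly intended (the paper's surrounding text says so), and your proof invokes it in precisely the right place.

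One concrete caution on the step you yourself flag as delicate: a bare Taylor truncation of $\ln(1+\delta)$ does \emph{not} establish $f'(\delta)=\frac{2\delta}{3}-\ln(1+\delta)\le 0$ on all of $(0,1)$. The truncation $\ln(1+\delta)\ge\delta-\delta^2/2$ only yields $f'(\delta)\le\delta\left(\frac{\delta}{2}-\frac{1}{3}\right)$, which is nonpositive only for $\delta\le 2/3$, and the target inequality is razor-thin at the right endpoint, since $f'(1)=\frac{2}{3}-\ln 2\approx-0.027$; low-order (indeed, any moderately low-order) truncations fail near $\delta=1$. The standard repair is a second-derivative sign analysis: $f''(\delta)=\frac{2}{3}-\frac{1}{1+\delta}$ is negative on $(0,1/2)$ and positive on $(1/2,1)$, so $f'$ first decreases and then increases; combined with $f'(0)=0$ and $f'(1)<0$ this gives $f'\le 0$ throughout $(0,1)$, hence $f\le f(0)=0$, which is the inequality you need. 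With that substitution your argument is complete; the lower tail is unproblematic, as $-\delta-(1-\delta)\ln(1-\delta)\le-\delta^2/2$ follows directly from the series expansion of $\ln(1-\delta)$.
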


In the following theorem we show that although we do not know the true joint probability distribution $\distt$, sampling according to the marginal probability  $p^{\pcolor}_{v}$ for each point $\point$ results in the amount of color having the same expectation for any collection of points. But furthermore, the amount of color would have a Chernoff bound for the independently sampled case. 
\begin{theorem}\label{is_t1}
Let $\Pr_{\distt}[X_{1}=x_1,\dots,X_{n}=x_n]$ equal the probability that $(X_1=x_1,\dots,X_n=x_n)$ where $X_i$ is the random variable for the color of vertex $i$ and $x_i \in \Colors$ ($\Colors$ being the set of colors) is a specific value for the realization and the probability is according to the true unknown joint probability distribution $\distt$. Using $X^{\pcolor}_{i}$ for the indicator random variable of color $\pcolor$ for vertex $i$, then for any collection of points $C$, the amount of color $\pcolor$ in the collection is $S^{\pcolor}_{\distt}= \sum_{i \in C} X^{\pcolor}_{i,\distt}$ when sampling according to $\distt$ and it is $S^{\pcolor}_{\disti}= \sum_{i \in C} X^{\pcolor}_{i,\disti}$ when independent sampling is done. We have that:
\begin{itemize}
    \item In general: $\Pr_{\distt}[X_{1}=x_1,\dots,X_{n}=x_n] \neq \Pr_{\disti}[X_{1}=x_1,\dots,X_{n}=x_n]$. 
    \item Expectations agree on the of amount of color: $\E[S^{\pcolor}_{\distt}] = \E[S^{\pcolor}_{\disti}]$. 
    \item The amount of color has a Chernoff bound for the independently sampled case $S^{\pcolor}_{\disti}$. 
\end{itemize}
\end{theorem}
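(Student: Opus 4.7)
The plan is to dispatch the three claims separately, since each follows from a different elementary observation, and none of them requires serious technical work once the right viewpoint is identified.

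For the first claim (joint distributions generally disagree), I would supply a small explicit counterexample. Take $|C|=2$ and two colors, and let $\distt$ be the distribution under which both points always share the same color, each color occurring with probability $1/2$. Both marginals are then uniform, so independent sampling according to those marginals also assigns each point each color with probability $1/2$; but $\disti$ assigns positive probability to the two ``mismatched'' outcomes, whereas $\distt$ assigns them probability zero. This witnesses $\Pr_{\distt}[\cdot]\neq \Pr_{\disti}[\cdot]$ on the joint, even though the marginals coincide. The only thing one must be careful about here is to make the marginals under the two distributions genuinely agree, so that the inequality is not vacuous.

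For the second claim (expectations agree on the amount of color), I would just use linearity of expectation together with the fact that, by construction, $\Pr_{\mathcal{D}}[X_i=\pcolor]=p_i^{\pcolor}$ for $\mathcal{D}\in\{\distt,\disti\}$, since $\disti$ is defined by these marginals and $\distt$ has them as its stated marginals. Thus $\E[X^{\pcolor}_{i,\mathcal{D}}]=p_i^{\pcolor}$ under either distribution, and summing over $i\in C$ yields $\E[S^{\pcolor}_{\distt}]=\E[S^{\pcolor}_{\disti}]=\sum_{i\in C} p_i^{\pcolor}$. No dependence structure is needed here, which is precisely why the two expectations must coincide.

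For the third claim (Chernoff bound under independent sampling), the key point is that under $\disti$ the indicators $\{X^{\pcolor}_{i,\disti}\}_{i\in C}$ are by definition mutually independent Bernoulli variables with parameters $p_i^{\pcolor}$. Their sum $S^{\pcolor}_{\disti}$ therefore satisfies the hypotheses of Theorem~\ref{chernoff} exactly, and the stated inequality $\Pr(|S^{\pcolor}_{\disti}-\mu|>\delta\mu)\leq 2e^{-\mu\delta^2/3}$ with $\mu=\sum_{i\in C}p_i^{\pcolor}$ follows verbatim. I do not expect any real obstacle: the theorem is essentially a bookkeeping statement justifying the use of independent sampling in Algorithm~\ref{alg:large_pfc_alg}, namely that passing from $\distt$ to $\disti$ preserves the per-cluster color expectations (so the relaxed fairness targets remain valid in expectation) while purchasing the independence needed to invoke concentration.
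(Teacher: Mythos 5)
Your proposal is correct and follows essentially the same route as the paper: the second bullet is linearity of expectation combined with the agreement of marginals, and the third is a direct application of Theorem~\ref{chernoff} to the independent Bernoulli indicators. For the first bullet you actually do a bit more than the paper, which merely asserts that the two joint distributions differ, whereas you exhibit a concrete two-point, perfectly-correlated counterexample whose marginals match the independent product; this is a small but genuine improvement in rigor, not a different approach.
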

\begin{proof}
The first point follows since we simply don't have the same probability distribution. The second is immediate from the linearity of expectations and the fact that both distributions agree in the marginal probabilities ($\Pr_{\distt}[X_i=\pcolor] = \Pr_{\disti}[X_i=\pcolor]=p^{\pcolor}_i$):
\begin{align*}
    \E[S^{\pcolor}_{\disti}] =& \E \Big[\sum_{i \in C} X^{\pcolor}_{i,\disti} \Big] =  \sum_{i \in C} \E\Big[X^{\pcolor}_{i,\disti} \Big]\\
    & = \sum_{i \in C} p^{\pcolor}_i = \sum_{i \in C} \E\Big[X^{\pcolor}_{i,\distt} \Big] = \E[S^{\pcolor}_{\distt}]
\end{align*}
The last point follows from the fact that $S^{\pcolor}_{\disti}$ is a sum of independent random variables and therefore the Chernoff bound has to hold (\ref{chernoff}). 
\end{proof}
\subsection{Relaxing the Upper and Lower Bounds}\label{relax_lc}
Suppose for an instance $\instpfc$ of probabilistic fair clustering that there exists a color $\pcolor$ for which the the upper and lower proportion bounds are equal, i.e. $l_{\pcolor}=u_{\pcolor}$. Suppose the optimal solution $SOL_{\PFC}=(S^*_{\PFC},\phi^*_{\PFC})$, has a cluster $C_i$ which we assume can be made arbitrarily away than the other points. The Chernoff bound guaranteed by independent sampling would not be useful since the realization has to precisely equal the expectation, not be within a $\delta$ of the expectation. In this case sampling will not result in cluster $C_i$ having a balanced color and therefore the points in $C_i$ would have to merged with other points (if possible, since the entire instance maybe infeasible) to have a cluster with balance equal to $l_{\pcolor}$ and $u_{\pcolor}$ for color $\pcolor$. Since we assumed cluster $C_i$ can be made arbitrarily far away the cost of deterministic instance generated can be arbitrarily worse. 

Note, that we do not really need $l_{\pcolor}=u_{\pcolor}$. Similar arguments can be applied if $l_{\pcolor}\neq u_{\pcolor}$, by assuming the that optimal solution has a cluster $C_i$ (which is arbitrarily far away) whose balance either precisely equals $l_{\pcolor}$ or $u_{\pcolor}$. Simply note that with independent sampling would result in violation to the bounds for cluster $C_i$.  

Therefore, in the worst case relaxing the bounds is necessary to make sure that a valid solution would remain valid w.h.p. in the deterministic instance generated by independent sampling.  

\subsection{Independent Sampling without Lower Bounded Cluster Sizes Could Generate Invalid Solutions}\label{lower_bounding_lc}
\begin{figure}[ht]
\vskip 0.2in
\begin{center}
\centerline{\includegraphics[width=\columnwidth]{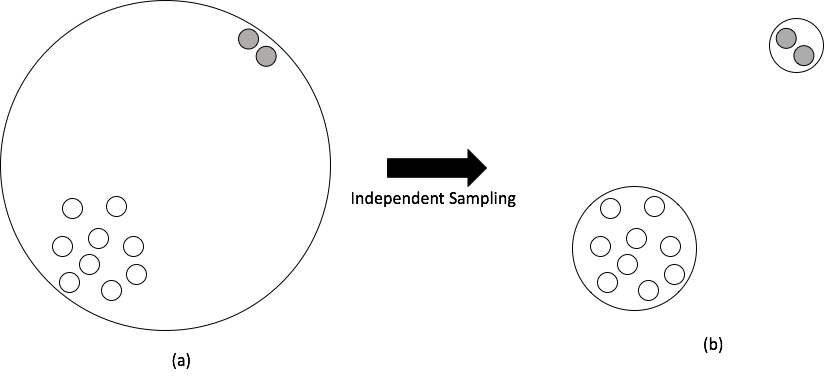}}
\caption{(a): The two outlier points at the top-right have probabilities 0.45 of being white, whereas the rest have probabilities 1. All points are merged together to form a balanced cluster. (b): An instance of same points with the colors resulting from independent sampling. The two outlier points have been merged to form their own cluster.}
\label{indep_sampl_out}
\end{center}
\vskip -0.2in
\end{figure}

To show that enforcing a lower bound on the cluster size is necessary, consider the case shown in figure \ref{indep_sampl_out}:(a) where the outlier points in the top-right have probability $0.45$ of being white, whereas the other points have probability 1 of being white. Let the lower and upper bounds for the white color be $l_{\text{white}}=0.6$ and $u_{\text{white}}=1$, respectively. Since the outlier points don't have the right color balance, they are merged with the other points, although that leads to a higher cost. 

However, independent sampling would result in the outlier points being white with probability $(0.45)(0.45) \simeq 0.2$. This makes the points have the right color balance and therefore the optimal solution for deterministic fair clustering would have these points merged as shown in figure \ref{indep_sampl_out}:(b). However, the cluster for the two outlier points is not a valid cluster for the probabilistic fair clustering instance

Therefore, forcing a lower bound is necessary to make sure that a solution found in deterministic fair clustering instance generated by independent sampling is w.h.p. valid for the probabilistic fair clustering instance.

\section{Example on Forming the Network Flow Graph for the Two-Color (Metric Membership) Case}\label{NF_details}
Suppose we have two centers and 5 vertices and that the LP solution yields the following assignments for center 1: $x_{11}=0.3,x_{12}=0.6,x_{13}=0.7,x_{14}=0,x_{15}=1.0$ and the following assignments for center 2: $x_{21}=0.7,x_{22}=0.4,x_{23}=0.3,x_{24}=1.0,x_{25}=0$. Further let the probability values be: $p_1=0.7,p_2=0.8,p_3=0.4,p_4=0.9,p_5=0.1$. The following explains how the network flow graph is constructed. 

\textbf{Cluster 1:} First we calculate $|C_1|=\ceil{\sum_{j \in \Points} x_{1j}}=\ceil{2.6}=3$, this means the we will have 3 vertices in $C_1$. The collection of vertices having non-zero assignments to center 1 are $\{1,2,3,5\}$, sorting the vertices by a non-increasing order according to their probability we get $\vec{A}_1=[2,1,3,5]$. Now we follow algorithm \ref{alg:nf_const_alg}, this leads to the graph shown in figure \ref{c1_ex}. 
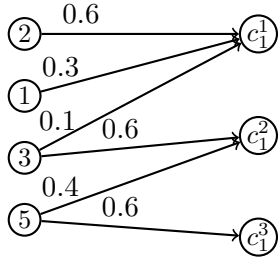
\begin{figure}[ht]
\vskip 0.2in
\begin{center}
\centerline{
\begin{tikzpicture}
[
    xscale=0.45, yscale=0.45,auto,thick,
    st node/.style={
        inner sep=0pt,minimum size=12pt,
        circle, fill=white, draw
  	},
  	v node/.style={
        inner sep=0pt,minimum size=12pt,
        circle, fill=white, draw, font=\small
  	},
  	c node/.style={
        inner sep=0pt,minimum size=14pt,
        circle, fill=white, draw, font=\small
  	},
  	big c node/.style={
        inner sep=0pt,minimum size=14pt,
        circle, fill=white, draw, font=\small
  	},
  	label node/.style={
        font=\small
  	}
]

	
    \node [v node] (vert_2) at (0,0) {$2$};
    \node [v node] (vert_1) [below of=vert_2, yshift=5] {$1$};
    \node [v node] (vert_3) [below of=vert_1, yshift=5] {$3$};
    \node [v node] (vert_5) [below of=vert_3, yshift=5] {$5$};

    \node [c node] (c11) [right of=vert_2, xshift=60] {$c^1_1$};
    \node [c node] (c21) [below of=c11, yshift=-10] {$c^2_1$};
    \node [c node] (c31) [below of=c21, yshift=-10] {$c^3_1$};

    

    \draw[->] (vert_2) -- (c11) node[pos=0.2, above] {$0.6$};
    \draw[->] (vert_1) -- (c11) node[pos=0.1, above] {$0.3$};
    \draw[->] (vert_3) -- (c11) node[pos=0.09, above] {$0.1$};

    \draw[->] (vert_3) -- (c21) node[pos=0.4, above] {$0.6$};
    \draw[->] (vert_5) -- (c21) node[pos=0.1, above] {$0.4$};
    
    \draw[->] (vert_5) -- (c31) node[pos=0.4, above] {$0.6$};


\end{tikzpicture}
}
\caption{Graph constructed in cluster 1. For clarity, we write above each edge the assignment is "sends" to the vertex in $C_1$. Notice how each vertex in $C_1$ receives a total assignment of 1, except for the last vertex $c^3_1$.}
\label{c1_ex}
\end{center}
\vskip -0.2in
\end{figure}

\textbf{Cluster 2:} We follow the same procedure for cluster 2. First we calculate $|C_2|=\ceil{\sum_{j \in \Points} x_{1j}}=\ceil{2.4}=3$, this means the we will have 3 vertices in $C_2$. The collection of vertices having non-zero assignments to center 2 are $\{1,2,3,4\}$, sorting the vertices by a non-increasing order according to their probability we get $\vec{A}_2=[4,2,1,3]$. Now we follow algorithm \ref{alg:nf_const_alg}, this leads to the graph shown in figure \ref{c2_ex}
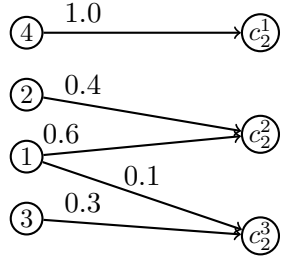
\begin{figure}[ht]
\vskip 0.2in
\begin{center}
\centerline{
\begin{tikzpicture}
[
    xscale=0.45, yscale=0.45,auto,thick,
    st node/.style={
        inner sep=0pt,minimum size=12pt,
        circle, fill=white, draw
  	},
  	v node/.style={
        inner sep=0pt,minimum size=12pt,
        circle, fill=white, draw, font=\small
  	},
  	c node/.style={
        inner sep=0pt,minimum size=14pt,
        circle, fill=white, draw, font=\small
  	},
  	big c node/.style={
        inner sep=0pt,minimum size=14pt,
        circle, fill=white, draw, font=\small
  	},
  	label node/.style={
        font=\small
  	}
]

	
    \node [v node] (vert_4) at (0,0) {$4$};
    \node [v node] (vert_2) [below of=vert_4, yshift=5] {$2$};
    \node [v node] (vert_1) [below of=vert_2, yshift=5] {$1$};
    \node [v node] (vert_3) [below of=vert_1, yshift=5] {$3$};

    \node [c node] (c12) [right of=vert_4, xshift=60] {$c^1_2$};
    \node [c node] (c22) [below of=c12, yshift=-10] {$c^2_2$};
    \node [c node] (c32) [below of=c22, yshift=-10] {$c^3_2$};

    

    \draw[->] (vert_4) -- (c12) node[pos=0.2, above] {$1.0$};
    \draw[->] (vert_2) -- (c22) node[pos=0.2, above] {$0.4$};
    
    \draw[->] (vert_1) -- (c22) node[pos=0.09, above] {$0.6$};
    \draw[->] (vert_1) -- (c32) node[pos=0.5, above] {$0.1$};

    \draw[->] (vert_3) -- (c32) node[pos=0.2, above] {$0.3$};


\end{tikzpicture}
}
\caption{Graph constructed in cluster 2. For clarity, we write above each edge the assignment is "sends" to the vertex in $C_2$. Notice how each vertex in $C_2$ receives a total assignment of 1, except for the last vertex $c^3_2$.}
\label{c2_ex}
\end{center}
\vskip -0.2in
\end{figure}
Now we construct the entire graph by connecting the edges from each vertex in $C_1$ to the vertex for center 1 and each vertex in $C_2$ to the vertex for center 2. Finally, we connect the vertices for 1 and 2 to the vertex $t$. This leads to the graph in figure \ref{entire_graph_ex}. Note that the edge weights showing the sent assignment are not put as they have no significance once the graph is constructed. 

The entire graph is constructed by the union of both subgraphs for clusters 1 and 2, but without repeating the vertices of $\Points$. Further, we drop the edge weights which designated the amount of LP assignment sent, as it has no affect on the following steps. Finally, the vertices of both $C_1$ and $C_2$ are connected to their centers $1$ and $2$ in $S$, respectively, and the centers themsevles are connected to vertex $t$. Figure \ref{entire_graph_ex} shows the final constructed graph. 

For the case of metric membership the procedure is unaltered, but instead of sorting according to the probability value $p_{\point}$ for a vertex, we sort according to the value $r_{\point}$. 
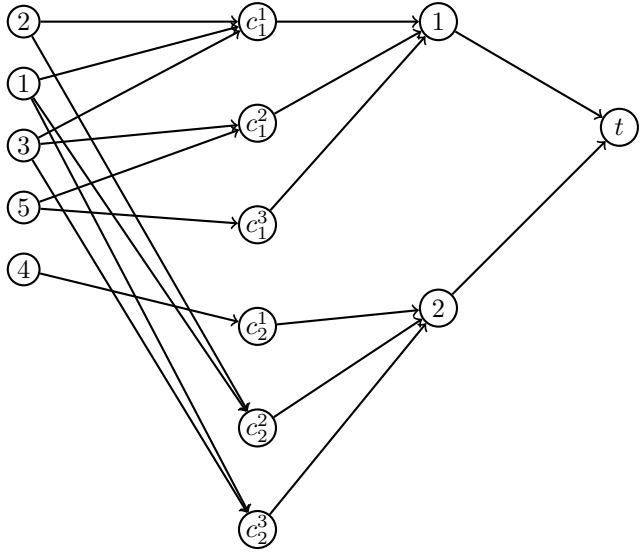
\begin{figure}[ht]
\vskip 0.2in
\begin{center}
\centerline{
\begin{tikzpicture}
[
    xscale=0.45, yscale=0.45,auto,thick,
    st node/.style={
        inner sep=0pt,minimum size=12pt,
        circle, fill=white, draw
  	},
  	v node/.style={
        inner sep=0pt,minimum size=12pt,
        circle, fill=white, draw, font=\small
  	},
  	c node/.style={
        inner sep=0pt,minimum size=14pt,
        circle, fill=white, draw, font=\small
  	},
  	big c node/.style={
        inner sep=0pt,minimum size=14pt,
        circle, fill=white, draw, font=\small
  	},
  	label node/.style={
        font=\small
  	}
]

	
    \node [v node] (vert_2) at (0,0) {$2$};
    \node [v node] (vert_1) [below of=vert_2, yshift=5] {$1$};
    \node [v node] (vert_3) [below of=vert_1, yshift=5] {$3$};
    \node [v node] (vert_5) [below of=vert_3, yshift=5] {$5$};

    \node [v node] (vert_4) [below of=vert_5, yshift=5] {$4$};


    \node [c node] (c11) [right of=vert_2, xshift=60] {$c^1_1$};
    \node [c node] (c21) [below of=c11, yshift=-10] {$c^2_1$};
    \node [c node] (c31) [below of=c21, yshift=-10] {$c^3_1$};
    
    \node [c node] (c12) [below of=c31, yshift=-10] {$c^1_2$};
    \node [c node] (c22) [below of=c12, yshift=-10] {$c^2_2$};
    \node [c node] (c32) [below of=c22, yshift=-10] {$c^3_2$};
    
    \node [c node] (s1) [right of=c11, xshift=40] {$1$};
    \node [c node] (s2) [below of=s1, yshift=-80] {$2$};
    \node [c node] (t) [right of=s1, xshift=40, yshift=-40] {$t$};

    

    \draw[->] (vert_2) -- (c11) node[pos=0.2, above]{};
    \draw[->] (vert_1) -- (c11) node[pos=0.1, above]{};
    \draw[->] (vert_3) -- (c11) node[pos=0.09, above]{};

    \draw[->] (vert_3) -- (c21) node[pos=0.4, above]{};
    \draw[->] (vert_5) -- (c21) node[pos=0.1, above]{};
    
    \draw[->] (vert_5) -- (c31) node[pos=0.4, above]{};

    \draw[->] (vert_4) -- (c12) node[pos=0.2, above]{};
    \draw[->] (vert_2) -- (c22) node[pos=0.2, above]{};
    
    \draw[->] (vert_1) -- (c22) node[pos=0.09, above]{};
    \draw[->] (vert_1) -- (c32) node[pos=0.3, above]{};

    \draw[->] (vert_3) -- (c32) node[pos=0.2, above]{};
    
    \draw[->] (c11) -- (s1) node[pos=0.2, above]{};
    \draw[->] (c21) -- (s1) node[pos=0.2, above]{};
    \draw[->] (c31) -- (s1) node[pos=0.2, above]{};

    \draw[->] (c12) -- (s2) node[pos=0.2, above]{};
    \draw[->] (c22) -- (s2) node[pos=0.2, above]{};
    \draw[->] (c32) -- (s2) node[pos=0.2, above]{};
    
    \draw[->] (s1) -- (t) node[pos=0.2, above]{};
    \draw[->] (s2) -- (t) node[pos=0.2, above]{};


\end{tikzpicture}
}
\caption{Diagram for the final network flow graph.}
\label{entire_graph_ex}
\end{center}
\vskip -0.2in
\end{figure}

\section{Further details on solving the lower bounded fair clustering problem}\label{dfc_lb_appendix}
The solution for the lower bounded deterministic fair clustering problem, follows a similar two step solution framework. Step (1) is unchanged and simply amounts to running a color-blind approximation algorithm with ratio $\alpha$. Step (2) sets up an LP similar to that in section \ref{step2}. The constraints in \ref{LPf1} still remain but with deterministic (not probabilistic) color assignments, further a new constraint lower bounding the cluster size is added. Specifically, we have the following LP:
\begin{alignat}{3}
 \min & \sum_{j \in \Points, i \in S} d^p(i,j) x_{ij}~~~\text{ s.t. }&  \nonumber\\
&l_{\pcolor} \sum_{j \in \Points} x_{ij} \leq \sum_{j \in \Points: \chi(j)=\pcolor} x_{ij}~~&, \forall i \in S, \forall \pcolor \in \Colors \label{LPf1LB} \\
&\sum_{j \in \Points: \chi(j)=\pcolor} x_{ij} \leq u_{\pcolor} \sum_{j \in \Points}x_{ij}~~&, \forall i \in S, \forall \pcolor \in \Colors \label{LPf2LB} \\
&\sum_{j \in \Points} x_{ij} \ge L ~~&, \forall i \in S \label{LB_const}\\
&\sum_{j \in \Points} x_{ij} =1~~&, \forall j \in \Points  \nonumber \\ 
& 0 \leq x_{ij} \leq 1~~&, \forall i \in S ,  \forall j \in \Points \nonumber  
\end{alignat}
Constraints \ref{LPf1LB} and \ref{LPf2LB} are the deterministic counterparts to constraints \ref{LPf1}, respectively. Constraint \ref{LB_const} is introduced to lower bound the cluster size. The issue is that a color-blind center may be closed (assigned no vertices) in the optimal solution, yet constraint \ref{LB_const} forces it to have at least $L$ many points. Therefore the way to fix this is to try all possible combinations of closing and opening the centers which is a total of $2^k$ possibilities. This makes the run time  fixed-parameter tractable time, i.e. $O(2^k \text{poly}(n))$. The resulting solution will have an approximation ratio of $\alpha+2$ (see \ref{bera_lb_proof}). 

What remains is to round the solution. We apply the network flow rounding from \cite{bercea2018cost} (specifically section 2.2 in \cite{bercea2018cost}). This results in a violation of at most 1 in the cluster size and a violation of at most 1 per color in any give cluster (lemma 8 in \cite{bercea2018cost}).

\section{Dependent Rounding for Multiple Colors under a Large Cluster Assumption}\label{tc_mm_sec}
Here we discuss a dependent rounding based solution for the $k$-center problem under the large cluster assumption \ref{lc_assumption}. First we start with a brief review/introduction of dependent rounding.  
\subsection{Brief Summary of Dependent Rounding}
Here we summarize the properties of dependent rounding, see \cite{gandhi2006dependent} for full details. Given a bipartite graph $(G=(A,B),E)$ each edge $(i,j) \in E$ has a value $0 \leq x_{ij} \leq 1$ which will be rounded to $X_{ij} \in \{0,1\}$. Further for every vertex $v \in A \cup B$ define the fractional degree as $d_{v}=\sum_{u:(v,u)\in E} x_{vu}$ and the integral degree as $D_{v}=\sum_{u:(v,u)\in E} X_{vu}$. Dependent rounding satisfies the following properties:
\begin{enumerate}
    \item \label{drp1} $\Pr[X_{ij}=1]=x_{ij}$. 
    \item \label{drp2} $\forall v \in A \cup B: D_{v} \in \{\floor{d_{v}},\ceil{d_{v}}\} $
    \item \label{drp3} $\forall v \in A \cup B$, let $E_{v}$ denote any subset of edges incident on $v$, then $\Pr[\bigwedge_{e_v \in E_{v}} X_{e_{v}}=b] \leq \Pi_{e_v \in E_{v}} \Pr[X_{e_{v}}=b]$ where $b \in \{0,1\}$.  
\end{enumerate}
We note that property \ref{drp3} implies the following theorem about the variables $X_{ij}$ (see theorem 3.1 in \cite{gandhi2006dependent}):
\begin{theorem}\label{dr_high_concentration}
Let $a_1,\dots,a_t$ be reals in $[0,1]$, and $X_1,\dots,X_t$ be random variables taking values in $\{0,1\}$, and $\E[\sum_{i}a_i X_i]=\mu$. If $\Pr[\bigwedge_{i \in S} X_{i}=b] \leq \Pi_{i \in S} \Pr[X_{i}=b]$ where $S$ is any subset of indices from $\{1,\dots,t\}$ and $b \in \{0,1\}$, then we have for $\delta \in (0,1)$:
\begin{align*}
    \Pr\Big[\abs{\sum_{i}a_i X_i -\mu} \ge \delta\mu\Big] \leq 2 e^{-\mu \delta^2/3}  
\end{align*}
\end{theorem}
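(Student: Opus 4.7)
The plan is to follow the classical Chernoff-bound template via the exponential moment method, replacing the usual independence hypothesis with the cylinder negative-correlation property of the $X_i$. For the upper tail $\Pr[\sum_i a_i X_i \ge (1+\delta)\mu]$, I would apply Markov's inequality to $\exp(\lambda \sum_i a_i X_i)$ for a parameter $\lambda>0$ to be optimized, reducing the task to bounding the moment generating function $\E[\exp(\lambda \sum_i a_i X_i)]$.

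The crucial step, and the only place where the structure of the $X_i$ enters, is the factorization $\E[\prod_i e^{\lambda a_i X_i}] \le \prod_i \E[e^{\lambda a_i X_i}]$. I would exploit that $X_i\in\{0,1\}$ to write the exact identity $e^{\lambda a_i X_i} = 1 + (e^{\lambda a_i}-1) X_i$, so that the product expands multilinearly as $\prod_i (1+(e^{\lambda a_i}-1)X_i) = \sum_{S}\prod_{i\in S}(e^{\lambda a_i}-1)\prod_{i\in S} X_i$. Taking expectations, the random factor becomes $\E[\prod_{i\in S}X_i] = \Pr[\bigwedge_{i\in S} X_i=1]$, which by the $b=1$ case of the hypothesis is at most $\prod_{i\in S} p_i$ with $p_i=\Pr[X_i=1]$. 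Because $\lambda>0$ makes every coefficient $\prod_{i\in S}(e^{\lambda a_i}-1)$ non-negative, the inequality survives term-by-term and collapses back to $\prod_i(1+(e^{\lambda a_i}-1)p_i) = \prod_i \E[e^{\lambda a_i X_i}]$.

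With the factorization in hand, the remainder is standard: I would apply $1+x \le e^x$ together with the convexity bound $e^{\lambda a_i}-1 \le a_i(e^{\lambda}-1)$ (valid for $a_i\in[0,1]$, $\lambda\ge 0$) to obtain $\E[\exp(\lambda\sum_i a_i X_i)] \le \exp(\mu(e^{\lambda}-1))$. Choosing $\lambda=\ln(1+\delta)$ and invoking the classical estimate $e^{\delta}/(1+\delta)^{1+\delta} \le e^{-\delta^2/3}$ on $\delta\in(0,1)$ yields the one-sided bound $\Pr[\sum_i a_i X_i \ge (1+\delta)\mu] \le e^{-\mu\delta^2/3}$.

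For the lower tail, the same exponential moment argument with $\lambda<0$ fails directly, since the coefficients $\prod_{i\in S}(e^{\lambda a_i}-1)$ then carry sign $(-1)^{|S|}$ and the $b=1$ property alone is insufficient. I would sidestep this by passing to the complementary variables $Y_i = 1-X_i$: the $b=0$ hypothesis on the $X_i$'s is precisely the $b=1$ hypothesis on the $Y_i$'s, so rewriting $e^{\lambda a_i X_i}=e^{\lambda a_i}e^{-\lambda a_i Y_i}$ and applying the upper-tail factorization to the $Y_i$'s with positive parameter $-\lambda$ restores the MGF bound. Optimizing gives $\Pr[\sum_i a_i X_i \le (1-\delta)\mu] \le (e^{-\delta}/(1-\delta)^{1-\delta})^{\mu} \le e^{-\mu\delta^2/2} \le e^{-\mu\delta^2/3}$, and a final union bound over the two tails produces the factor of $2$ in the stated inequality. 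I expect the main obstacle to be recognizing that the joint cylinder-style negative-correlation hypothesis over every subset $S$ (not merely over pairs) is exactly what is needed so that the multilinear expansion of the product moment can be bounded termwise, mimicking the independent case in both the $b=1$ and the $b=0$ directions.
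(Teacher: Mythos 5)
Your proposal is correct and takes essentially the same route as the proof the paper relies on (it cites Theorem 3.1 of \cite{gandhi2006dependent}, which follows the Panconesi--Srinivasan extension of the Chernoff--Hoeffding bounds): the moment-generating-function bound is obtained by the multilinear expansion $\prod_i\bigl(1+(e^{\lambda a_i}-1)X_i\bigr)$, bounded termwise via the subset negative-correlation hypothesis with $b=1$ (all coefficients being nonnegative for $\lambda>0$), and the lower tail is handled by passing to $Y_i=1-X_i$ so that the $b=0$ hypothesis plays the role of $b=1$ for the complements. Your remaining steps---the convexity bound $e^{\lambda a_i}-1\le a_i(e^{\lambda}-1)$, the choices $\lambda=\ln(1+\delta)$ and $\lambda=-\ln(1-\delta)$, the estimates $e^{\delta}/(1+\delta)^{1+\delta}\le e^{-\delta^2/3}$ and $e^{-\delta}/(1-\delta)^{1-\delta}\le e^{-\delta^2/2}$ for $\delta\in(0,1)$, and the union bound yielding the factor $2$---match the standard argument with no gaps.
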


\subsection{Multiple Color Large Cluster solution using Dependent Rounding}
For the multiple color $k$-center problem satisfying assumption \ref{lc_assumption}. Form the following bipartite graph $(G=(A,B),E)$, $A$ has all vertices of of $\Points$ , and $B$ has all of the vertices of $S$ (the cluster centers). Given fractional assignments $x_{ij}$ that represent the weight of the edge, $\forall (i,j) \in E$. If $x_{ij}$ is the optimal solution to the lower bounded probabilistic fair assignment problem (theorem \ref{bera_lb_proof}), then applying dependent rounding leads to the following theorem:
\begin{theorem}\label{dr_mc}
Under assumption \ref{lc_assumption}, the integer solution resulting from dependent rounding for the multi-color probabilistic $k$-center problem has: (1) An approximation ratio of $\alpha +2$. (2) For any color $h_{\ell}$ and any cluster $i \in S$, the amount of color $S^{h_{\ell}}_{C_i}=\sum_{j \in \Points} p^{h_{\ell}}_j X_{ij}$ is concentrated around the LP assigned color $\sum_{j \in \Points} p^{h_{\ell}}_j x_{ij}$.   
\end{theorem}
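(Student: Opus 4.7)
The plan is to split the theorem into its two independent claims and handle each by invoking exactly one of the three properties of dependent rounding recalled above; both parts become clean once the correct edge subsets are identified.

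For claim~(1), I would first apply Theorem~\ref{bera_theorem}: the color-blind $\alpha$-approximation together with the LP relaxation of the fair $k$-center assignment problem yields a fractional solution whose radius is at most $(\alpha+2)\OPT_{\PFC}$. Recall that the $k$-center LP encodes its radius guess $w$ via the hard constraint $x_{ij}=0$ whenever $d(i,j)>w$. By marginal preservation (property~\ref{drp1}) we have $\Pr[X_{ij}=1]=0$ on every such edge, hence every rounded assignment edge satisfies $d(i,j)\le w\le (\alpha+2)\OPT_{\PFC}$, which is the desired $k$-center approximation guarantee. Feasibility of the rounded clustering (exactly one center per point) follows from degree preservation (property~\ref{drp2}) applied to each point-side vertex $j\in A$: its fractional degree $\sum_{i\in S}x_{ij}=1$ forces the integer degree to be exactly $1$.

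For claim~(2), fix a color $\pcolor$ and a center $i\in S$. The quantity $S^{\pcolor}_{C_i}=\sum_{j\in\Points}p^{\pcolor}_j X_{ij}$ is a $[0,1]$-weighted sum of the rounded variables on the star of edges incident to $i$ in the bipartite graph. The key observation is that this star is precisely a set $E_v$ (with $v=i$) on which the negative-correlation property~\ref{drp3} applies. Setting $a_j=p^{\pcolor}_j\in[0,1]$ and $\mu_{i,\pcolor}=\E[S^{\pcolor}_{C_i}]=\sum_j p^{\pcolor}_j x_{ij}$, Theorem~\ref{dr_high_concentration} yields
\begin{equation*}
\Pr\bigl[\,\abs{S^{\pcolor}_{C_i}-\mu_{i,\pcolor}}\ge\delta\,\mu_{i,\pcolor}\,\bigr]\le 2\exp\bigl(-\mu_{i,\pcolor}\delta^2/3\bigr).
\end{equation*}
Combining the LP fairness lower bound $\sum_j p^{\pcolor}_j x_{ij}\ge l_{\pcolor}\sum_j x_{ij}$ with Assumption~\ref{lc_assumption} gives $\mu_{i,\pcolor}\ge l_{\min}L=\Omega(n^r)$, and a union bound over the $k\le n/L$ centers and $|\Colors|$ colors, exactly as in the proof of Theorem~\ref{whp_sample}, converts the per-cluster tail into a high-probability simultaneous concentration statement over the whole instance.

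The main obstacle is conceptual rather than computational: one must recognise that the summation defining $S^{\pcolor}_{C_i}$ is supported on a star around a single $B$-vertex of the bipartite graph, which is exactly the hypothesis needed to invoke property~\ref{drp3}; once this is in place, both the Chernoff-style bound and the union bound follow by direct quotation of the earlier results. The only other delicate point is on the approximation side, where property~\ref{drp1} must be combined with the LP's hard distance constraints to guarantee that rounding cannot create a long-edge assignment -- a short step, but one that is specific to $k$-center and would need to be reworked if one tried to port the dependent-rounding approach to $k$-median or $k$-means, whose objectives are not enforced by hard feasibility constraints.
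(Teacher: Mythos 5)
Your proof follows essentially the same route as the paper's: property~\ref{drp1} ensures no point is rounded onto an edge the LP left at zero (preserving the $\alpha+2$ radius bound), and property~\ref{drp3} together with Theorem~\ref{dr_high_concentration} plus the union bound from Theorem~\ref{whp_sample} gives the concentration claim. Your additional remarks---using degree preservation (property~\ref{drp2}) to certify that each point receives exactly one center, and lower-bounding $\mu_{i,\pcolor}\ge l_{\min}L$ via the LP fairness constraint---are details the paper leaves implicit, but they do not change the argument.
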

\begin{proof}
For (1): Note that the approximation ratio before applying dependent rounding is $\alpha+2$. By property \ref{drp1} of dependent rounding if $x_{ij}=0$, then $\Pr[X_{ij}=1]=0$ and therefore a point will not be assigned to a center it was not already assigned to by the LP. 

For (2): Again by property \ref{drp1} of dependent rounding $\E_{DR}[X_{ij}]= (1) x_{ij} + 0=x_{ij}$ where $\E_{DR}$ refers to the expectation with respect to the randomness of dependent rounding, therefore for any cluster $i$ the expected amount of color equals the amount of color assigned by the LP, i.e. $\E_{DR}[S^{h_{\ell}}_{C_i}]=\E_{DR}[\sum_{j \in \Points} p^{h_{\ell}}_j X_{ij}]=\sum_{j \in \Points} p^{h_{\ell}}_j \E_{DR}[X_{ij}]=\sum_{j \in \Points} p^{h_{\ell}}_j x_{ij}$. It follows by property \ref{drp3} of dependent rounding and theorem \ref{dr_high_concentration} that $S^{h_{\ell}}_{C_i}$ is highly concentrated around $\E_{DR}[S^{h_{\ell}}_{C_i}]$. Specifically :
\begin{align*}
        \Pr\Big[\abs{S^{h_{\ell}}_{C_i} - \E_{DR}[S^{h_{\ell}}_{C_i}]} \ge \delta \E_{DR}[S^{h_{\ell}}_{C_i}]\Big] & \leq 2  e^{-\E_{DR}[S^{h_{\ell}}_{C_i}] \delta^2/3} 
\end{align*}
Similar to the proof of \ref{whp_sample}, the probability of failure can be upper bounded by:
\begin{align*}
    &\Pr\Big(\Bigl\{\exists  i \in \{1,\dots,k\}, \pcolor \in \Colors | \lvert S^{\pcolor}_{C_i} -\E[S^{\pcolor}_{C_i}] \rvert >  \delta \E[S^{\pcolor}_{C_i}]\Bigl\}\Big)   \\
    & \leq 2k \lvert\Colors\rvert \exp(- \frac{\delta^2}{3} L l_{\min}) \leq 2 \frac{n}{L} \lvert\Colors\rvert\exp(- \frac{\delta^2}{3} L l_{\min})\\
    & \leq 2 \lvert\Colors\rvert n^{1-r} \exp(- \frac{\delta^2}{3}  l_{\min} n^{r})
\end{align*}
Therefore w.h.p the returned integral solution will be concentrated around the LP color assignments which are fair. 
\end{proof}
Note however, that obtaining the optimal fractional solution $x_{ij}$ takes $O(2^k \text{poly}(n))$ time.

\section{Further Experimental Details and Results}\label{further_exps}

\subsection{Further Details about the Datasets and the Experimental Setup}
For each dataset, the numeric features are used as coordinates and the distance between points is equal to Euclidean distance. The numeric features are normalized prior to clustering. 

For metric membership in the $\adult$ dataset, age is not used as a coordinate despite the fact that it is numeric since it is the fairness attribute. Similarly, for the $\credit$ dataset, credit is not used as a coordinate.  

When solving the min-cost flow problem, distances are first multiplied by a large number (1000) and then rounded to integer values. After obtaining the solution for the flow problem, the cost is calculated with the original distance values (which have not been rounded) to verify that the cost is not worse.  

Although run-time is not a main concern in this paper. We find that we can solve large instances containing 100,000 points for the $k$-means with 5 clusters in less than 4 minutes using our commodity hardware.  

\subsection{Further Experiments}
Here we verify the performance of our algorithm on the $k$-center and the $k$-median objectives.  All datasets have been sub-sampled to 1,000 data points. For the two color probabilistic case, throughout we set $\pacc=0.9$ (see section \ref{sec:experiments-two-color} for the definition of $\pacc$). 
\subsubsection{$k$-center}
As can be seen from figure \ref{kcenterprob} our violations are indeed less than 1 matching the theoretical guarantee. Similarly, for metric membership the normalized violation is less than 1 as well, see figure \ref{kcenterage}. 
\begin{figure}[ht]
\vskip 0.2in
\begin{center}
\centerline{\includegraphics[width=\columnwidth]{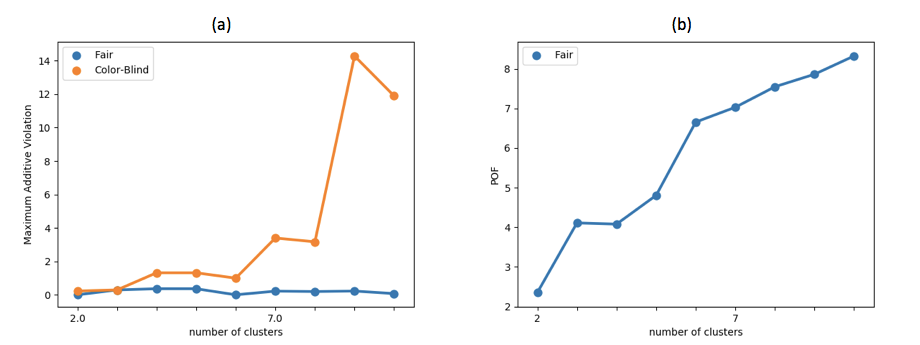}}
\caption{$k$-center for the two color probabilistic case using the $\bank$ dataset. (a): number of clusters vs maximum violation, (b): number of clusters vs POF.}
\label{kcenterprob}
\end{center}
\vskip -0.2in
\end{figure}

\begin{figure}[ht]
\vskip 0.2in
\begin{center}
\centerline{\includegraphics[width=\columnwidth]{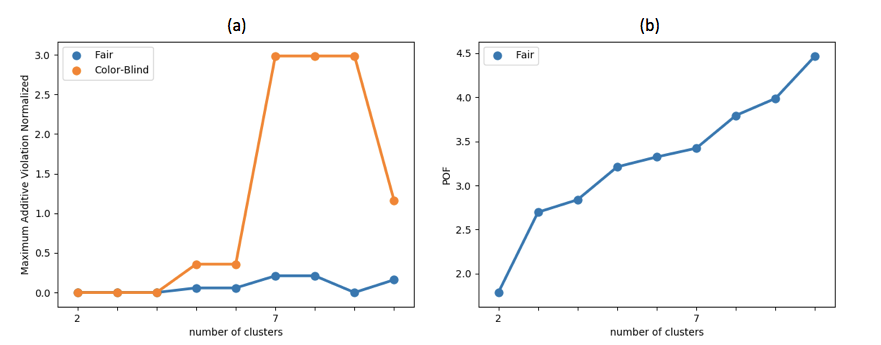}}
\caption{$k$-center for the metric membership problem using the $\adult$ dataset (metric membership over age). (a): number of clusters vs normalized maximum violation, (b): number of clusters vs POF.}
\label{kcenterage}
\end{center}
\vskip -0.2in
\end{figure}

\subsubsection{$k$-median}
Similar observations apply to the $k$-median problems. That is, our algorithm indeed leads to small violations not exceeding 1 in keeping with the theory. See figure \ref{kmedianprob} for the two color probabilistic case and figure \ref{kmedrage} for the metric membership case. 
\begin{figure}[ht]
\vskip 0.2in
\begin{center}
\centerline{\includegraphics[width=\columnwidth]{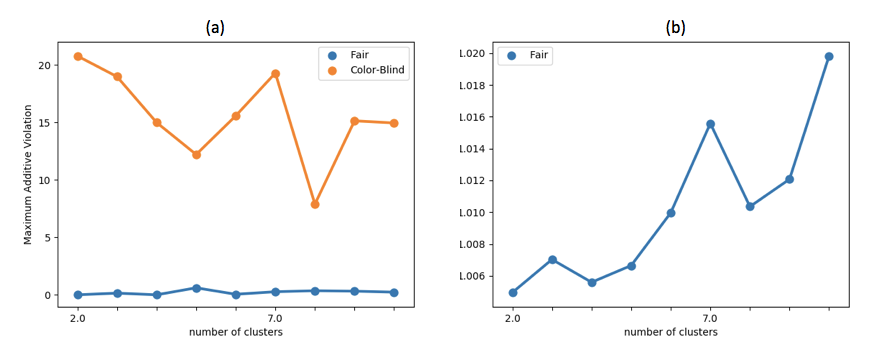}}
\caption{$k$-median for the two color probabilistic case using the $\bank$ dataset. (a): number of clusters vs maximum violation, (b): number of clusters vs POF.}
\label{kmedianprob}
\end{center}
\vskip -0.2in
\end{figure}

\begin{figure}[ht]
\vskip 0.2in
\begin{center}
\centerline{\includegraphics[width=\columnwidth]{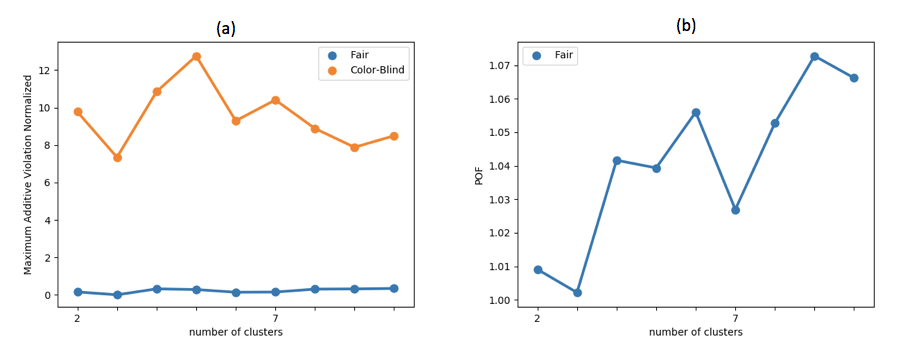}}
\caption{$k$-median for the metric membership problem using the $\credit$ dataset (metric membership over credit) (a): number of clusters vs normalized maximum violation, (b): number of clusters vs POF.}
\label{kmedrage}
\end{center}
\vskip -0.2in
\end{figure}

\end{document}